%% file: main.tex
\documentclass[11pt]{article}
\usepackage[margin=1in]{geometry}
\usepackage{amsmath,amssymb,amsthm}
\usepackage{booktabs}
\usepackage{graphicx}
\usepackage{microtype}
\usepackage{xcolor}
\usepackage{tikz}
\usepackage{longtable}
\usetikzlibrary{arrows.meta,positioning,fit,backgrounds,shapes.geometric}
\definecolor{cfront}{HTML}{1B4F72}
\definecolor{cmid}{HTML}{2E86C1}
\definecolor{ccheap}{HTML}{AED6F1}
\definecolor{cceil}{HTML}{922B21}
\definecolor{cgain}{HTML}{1E8449}
\definecolor{cbg}{HTML}{F4F6F7}
\usepackage[numbers,sort&compress]{natbib}
\usepackage[hidelinks]{hyperref}

\theoremstyle{plain}
\newtheorem{proposition}{Proposition}
\newtheorem{lemma}{Lemma}
\newtheorem{corollary}{Corollary}
\theoremstyle{definition}

\newcommand{\E}{\mathbb{E}}
\newcommand{\Var}{\operatorname{Var}}
\newcommand{\Corr}{\operatorname{Corr}}

\title{\bfseries When Does Combining Language Models Help?\\[4pt]
{\large\mdseries A Co-Failure Ceiling on Routing, Voting, and Mixture-of-Agents Across 67 Frontier Models}}
\author{Josef Chen\\ KAIKAKU\\ \texttt{josef@kaikaku.ai}}
\date{\today}
\begin{document}
\maketitle

\begin{abstract}
Multi-model LLM systems, including routing, voting, cascades, fusion, and mixture-of-agents (MoA), are increasingly
used to push accuracy beyond any single model. We show that the achievable gain has a ceiling, fixed by a quantity
other than the one the field usually reports. For any policy whose output is one of the member models' answers, such
as a router, a majority vote, or a cascade, accuracy cannot exceed $1-\beta$, where $\beta$ is the rate at which
every model is wrong on the same query. Practice instead reports the average pairwise error correlation $\rho$, and
we prove that $\rho$ cannot identify $\beta$: error laws with identical marginals and identical pairwise correlations
can still differ in $\beta$. A Clopper--Pearson bound on $\beta$ turns one graded, held-out query set into a \$0
certificate on the largest gain any such policy could deliver, before a router is trained. Across $67$ frontier
models from $21$ providers, among them GPT-5.5, Claude Opus 4.8, Gemini 3.1 Pro, Grok-4.3, DeepSeek V4,
Qwen3.7-Max, and Kimi K2.7, a correctly (tetrachoric) calibrated single-factor model still underprices the all-wrong
tail, by a margin that widens with the size of the pool: about $2.5$ times on open-ended mathematics ($90\%$ CI
$1.7$ to $3.4$, $k=17$), holding under the full $67$-model Gaussian copula ($\beta=0.052$ versus a predicted
$0.023$) and recurring on execution-graded code ($\beta=0.079$). At matched quality, a diverse low-$\rho$ ensemble
beats a high-$\rho$ Self-MoA one. Re-asking the same GPQA-Diamond questions in free-response rather than
multiple-choice form reopens the tail ($\beta=0.127$; a five-judge LLM panel, $\kappa$ from $0.73$ to $0.92$),
locating the effect in task format rather than subject. On our pool, and on tasks where answers can be checked,
combining models rarely beats the single best model without a strong query-level routing signal; the gains come from
models that fail on different questions rather than from adding more of them.
\end{abstract}

\section{Introduction}
The single-model era is closing. Enterprises serve production traffic across hundreds of models from dozens of
providers, selecting per workload on cost, latency, reliability, and capability; a routing layer increasingly
mediates spend and governance, and carries the provider risk \citep{ong2024routellm,hu2024routerbench,chen2023frugalgpt}. The
operative question has shifted from which model is best to \emph{how a buyer should allocate a token and
dollar budget across a heterogeneous, correlated, rapidly depreciating pool}. Practitioners answer it with a single
diagnostic: the pairwise error correlation $\rho$ between models, low values signalling that diversity will pay.

Our central finding is that this diagnostic is the wrong one. What bounds orchestration is $\beta$, the rate at which
\emph{all} models fail on the same query: no router, vote, or cascade can exceed accuracy $1-\beta$, and $\rho$ cannot
see $\beta$. The gap is not academic, because on today's open-ended tasks the strongest models increasingly fail
together. A Clopper--Pearson bound on $\beta$, measurable from one graded query set, says in advance how much room any
such policy has to beat the single best model.

\paragraph{What we concede.} The equicorrelated variance floor is classical portfolio and ensemble theory
\citep{markowitz1952,statman1987,krogh1995ensemble,ueda1996,wood2023diversity} and, in its Gaussian-copula form for
language-model ensembles, \citet{turkmen2026ensembleselection}; the oracle upper envelope and the optimality of routing
and cascading are due to \citet{dekoninck2024cascaderouting}; and our tools (linear-programming duality
\citep{bertsimas1997lp}, Clopper--Pearson intervals, the Gaussian copula, the single-factor probit) are standard. We
claim no new routing algorithm. The contribution is their specialization to priced inference orchestration and the
market-scale measurement.

\paragraph{Contributions.}
\begin{enumerate}\itemsep2pt
\item \emph{The orchestration ceiling and a finite-sample certificate} (\S\ref{sec:realiz}, Prop.~\ref{prop:cert}): no router,
vote, or cascade can exceed $1-\beta$; the oracle gain localizes as $\Pr[\text{single-best wrong}]-\beta$; and a Clopper--Pearson
bound turns one query sample into a certificate on the largest gain any such policy can deliver.
\item \emph{Why pairwise $\rho$ underprices co-failure} (Prop.~\ref{prop:poolbias}): under tail dependence the single-factor
estimate of $\beta$ from $\rho$ is downward-biased, with the bias diverging in pool size and driven by a common-mode atom rather
than tail dependence as such.
\item \emph{A market-scale measurement} (\S\ref{sec:setup}--\ref{sec:results}): on $67$ models from $21$ provider families,
oracle routing gain is positive yet a learned router realizes almost none of it; the $\beta$/$\rho$ gap and its growth with pool
size are measured directly; and two regimes, ceiling-bound (open-ended math) and realizability-bound (science), appear across
domains, though the decisive all-models-wrong counts are small (\S\ref{sec:results}).
\item \emph{Supporting economics} (App.~\ref{app:econ}): budget-constrained routing as a priced assignment with a single shadow
price (Prop.~\ref{prop:duality}); a cost-aware diversification limit (Props.~\ref{prop:keven},~\ref{prop:lambda}); and cascade
calibration boundaries (Prop.~\ref{prop:cascade}, Cor.~\ref{cor:edge}). These specialize standard tools and are deferred to the
appendix, as is an observational option value of breadth under churn (App.~\ref{app:churn}).
\end{enumerate}

\begin{figure}[t]\centering
\resizebox{\textwidth}{!}{%
\begin{tikzpicture}[font=\small,
  box/.style={rounded corners=2pt, draw, align=center, inner sep=4pt},
  model/.style={rounded corners=1pt, draw=none, text=white, font=\scriptsize\bfseries, minimum width=30mm, minimum height=4.4mm, inner sep=1pt},
  arr/.style={-{Stealth[length=2.4mm]}, thick, draw=cfront}]
\node[box, fill=cbg] (q) {query $x$\\[-1pt]\scriptsize type $t$};
\node[box, draw=cfront, very thick, fill=white, right=8mm of q, align=center] (orch)
  {\textbf{Orchestration}\\[1pt]\scriptsize route $\,\cdot\,$ cascade $\,\cdot\,$ fuse\\[1pt]\scriptsize shadow price $\lambda_B$};
\node[model, fill=cfront, right=12mm of orch, yshift=11mm] (m1) {GPT-5.5 \;\$30 \,$\cdot$\, Claude Opus 4.8};
\node[model, fill=cfront, below=1.2mm of m1] (m2) {GPT-5.4 \,$\cdot$\, Gemini 3.1 Pro \,$\cdot$\, Grok-4.3};
\node[model, fill=cmid, below=1.2mm of m2] (m3) {Qwen3.7-Max \,$\cdot$\, GLM-5.2 \,$\cdot$\, Kimi K2.7};
\node[model, fill=cmid, below=1.2mm of m3] (m4) {DeepSeek V4 \,$\cdot$\, MiniMax M3 \,$\cdot$\, Mistral};
\node[model, fill=ccheap, text=black, below=1.2mm of m4] (m5) {Llama-4 \,$\cdot$\, Gemma \,$\cdot$\, Phi \;\$0.1};
\begin{scope}[on background layer]
\node[draw=gray, dashed, rounded corners, fit=(m1)(m5), inner sep=5pt] (pool) {};
\end{scope}
\node[above=0.5mm of pool, font=\scriptsize, text=gray!50!black] {67 models, 21 families: correlated, priced, churning};
\node[box, fill=cgain!12, draw=cgain, right=10mm of pool, align=center] (out) {accuracy\\[-1pt]$\le 1-\beta$};
\draw[arr] (q) -- (orch);
\draw[arr] (orch) -- (pool.west);
\draw[arr] (pool.east) -- (out);
\node[box, fill=cceil!8, draw=cceil, below=9mm of orch, xshift=-2mm, align=left, font=\scriptsize, text width=52mm] (r1)
  {\textbf{\textcolor{cceil}{Ceiling-bound}} (open-ended math): $\beta>0$ caps every policy; a correctly-calibrated single-factor model still underprices the tail $\sim\!2.5\times$ (residual common mode), growing with pool size.};
\node[box, fill=cgain!8, draw=cgain, right=5mm of r1, align=left, font=\scriptsize, text width=52mm] (r2)
  {\textbf{\textcolor{cgain}{Realizability-bound}} (science): $\beta\!\approx\!0$, ceiling open; the $0.15$ oracle gain is resolvable disagreement a router would still have to realize.};
\end{tikzpicture}}
\caption{\label{fig:framework}\textbf{Orchestration is allocation.} A query is routed, cascaded, or fused over a priced,
correlated, fast-churning pool of $67$ frontier-to-cheap models; the optimal policy is a per-type bang-per-buck rule with a
single shadow price $\lambda_B$ on the inference dollar (Prop.~\ref{prop:duality}). No selection policy can exceed the ceiling
$1-\beta$ set by the rate $\beta$ at which \emph{all} models fail at once (Prop.~\ref{prop:cert}). The field reports pairwise
error correlation $\rho$ to decide whether to orchestrate; $\rho$ is blind to $\beta$, and headroom is foreclosed in two opposite
regimes that $\rho$ cannot tell apart.}
\end{figure}

\section{Related Work}\label{sec:related}
\paragraph{Routing and cascades.} Learned routers select one model per query \citep{ong2024routellm,ding2024hybridllm};
cascades escalate from cheap to strong on low confidence \citep{chen2023frugalgpt,madaan2023automix}.
\citet{dekoninck2024cascaderouting} unify the two and prove optimality of a routing strategy and an optimal cascade;
\citet{jitkrittum2023cascade} characterize the optimal two-model deferral rule and show confidence-based deferral can
be sub-optimal when the downstream model's errors are unmodeled, directly relevant to our calibration-and-edge
condition. \citet{hu2024routerbench} benchmark routing on pre-computed outcomes. These works treat the oracle as an
empirical ceiling and the cascade threshold as a tuned hyperparameter; we add the dollar stopping rule, the
calibration dominance boundary, and the volume ceiling they leave implicit.
\paragraph{Ensembling and fusion.} \citet{jiang2023llmblender} rank-and-fuse generations; \citet{wang2024moa} aggregate
in layers; \citet{li2024moreagents} sample-and-vote; \citet{li2025selfmoa} show that sampling the single best model
(Self-MoA) often beats heterogeneous fusion, attributing this to a quality--diversity trade-off (mixing helps only when
members are of similar quality). Classical ensemble theory supplies the variance decomposition
\citep{krogh1995ensemble,ueda1996}, its modern unification \citep{wood2023diversity}, and the caution that diversity does
not guarantee accuracy gains \citep{kuncheva2003}. The accuracy \emph{ceiling} we use is also classical: it is Kuncheva's
\emph{Oracle} combiner (correct iff some member is correct, i.e.\ $1-\beta$ \citep{kuncheva2004combining}) and the
majority-vote-accuracy limits of \citet{kuncheva2003limits}, both predating LLMs and presuming labels. Our contribution is
not the ceiling but (i) that it bounds \emph{every} selection policy, debate and self-consistency included, whose outputs
are a.s.\ member answers, and (ii) its conversion into a labels-free, finite-sample \$0 \emph{certificate} (Prop.~\ref{prop:cert});
an Oracle needs the labels a certificate does not.
\paragraph{Error correlation and inference economics.} \citet{kim2025correlated} document, over $350+$ models, that
\emph{pairwise} errors are substantially correlated and rise with accuracy and shared provider, but they measure the
bivariate agree-when-wrong statistic on multiple-choice tasks and stop at qualitative implications; they neither define the
joint all-models-wrong rate $\beta$ nor bound any selection policy by it. We show that pairwise correlation provably cannot
identify $\beta$ for $m\ge3$ (Prop.~\ref{prop:nonid}), and that $\beta$, not $\rho$, is the ceiling. This monoculture of
errors traces to \citet{kleinberg2021monoculture}. Closest is \citet{turkmen2026ensembleselection}, who also fit a
tetrachoric Gaussian copula to binary LLM errors and derive an equicorrelated ensemble error \emph{floor}; we reach the
\emph{opposite} conclusion and show why: that very Gaussian floor \emph{underprices} the empirical co-failure tail, because
the driver is a common-mode atom the copula cannot represent (zero lower tail dependence; Prop.~\ref{prop:poolbias}), which we
confirm on the full $67$-model matrix and against a Clayton control. The copula they use to bound the tail is the one we
measure to be loose. \citet{erol2025costofpass} formalize dollars-per-correct via production theory, our primary metric.
\paragraph{Finance and real options.} Mean--variance allocation \citep{markowitz1952}, cost-aware diversification
\citep{statman1987}, the value of information \citep{howard1966}, selective prediction \citep{geifman2017}, switching
costs \citep{klemperer1995}, and real-options volatility comparative statics \citep{mcdonald1986,dixit1994,merton1976}
are the antecedents we specialize and, where standard, concede.

\section{Problem Formulation}\label{sec:setup-formal}
Queries $x$ carry a latent type $t=T(x)\sim D$, with type prior $p(t)=\Pr[T=t]$. A pool $M=\{1,\dots,m\}$ of models has
quality $q_i(t)\in[0,1]$ (expected per-query utility on type $t$) and price $c_i\ge 0$ (dollars/query); write
$\bar q_i=\E_t q_i(t)$. A (possibly stochastic) routing policy $\pi\colon T\to\Delta(M)$ has value
$V(\pi)=\E_t\sum_i \pi(i\mid t)\,q_i(t)$ and cost $K(\pi)=\E_t\sum_i \pi(i\mid t)\,c_i$. We also consider fusion (query
several, combine) and cascading (escalate on low confidence). The buyer's objective is dollars-per-correct
\citep{erol2025costofpass}, or quality subject to a budget; we make it explicit in each section.

\paragraph{Economic scaffolding (deferred to App.~\ref{app:econ}).} Treating orchestration as allocation yields a
compact economic layer that we use but do not foreground; the tools are standard and the empirical results below do
not depend on it. Three facts, stated and proved in App.~\ref{app:econ}: under a dollar budget, routing is a priced
assignment with a single shadow price $\lambda_B$ (Prop.~\ref{prop:duality}); cost-aware fusion has a diversification
limit $k^\ast(\rho,c)$ that shrinks as error correlation rises (Props.~\ref{prop:keven},~\ref{prop:lambda}), above the
classical equicorrelated variance floor we concede (Prop.~\ref{prop:floor}); and a calibrated cascade collapses to
random mixing exactly as the verifier AUC falls to $1/2$, with a price-independent escalation ceiling $1-a_L/a_H$
(Prop.~\ref{prop:cascade}, Cor.~\ref{cor:edge}).

\section{Experimental Setup}\label{sec:setup}
We executed a pre-registered program on a pool of 15 current models across 9 provider families: frontier (Claude
Opus~4.8, GPT-5.1, Gemini~3.1~Pro, Kimi~K2.7), mid (Claude Sonnet~4.6, GPT-5-mini, Gemini~3.5~Flash, Qwen3-235B, Mistral-Large,
MiniMax~M2.7, DeepSeek~V3.2), and cheap (Claude Haiku~4.5, GPT-5-nano, Gemini~3.1~Flash-Lite, Llama-4-Maverick); exact dated
snapshots and prices are frozen in the registry (App.~\ref{app:repro}). The pillar experiments use five benchmarks: a saturated
mix (GSM8K, MMLU, ARC-Challenge, MATH-500) and a harder set (MMLU-Pro), with $100$--$200$ queries per dataset. For the
\emph{market-scale} realizability measurement (\S\ref{sec:realiz}) we expand the pool to $67$ models across $21$ provider
families---the live OpenRouter catalog from the current frontier down to small open-weights (GLM, Qwen, DeepSeek, MiniMax,
Nemotron, Llama-3.x, Mistral, Gemma, Phi, Granite, and others), chat/instruct only, with live-verified prices (full named roster, App.~\ref{app:roster})---and add hard domains that probe co-failure: open-ended competition
mathematics on two benchmarks (MATH-500 and the harder MATH-Hard Level-5; plus AIME-2024/2025, whose release post-dates the older
models' training cutoff) and graduate-level science (GPQA-Diamond, Physics/Chemistry/Biology). Grading is programmatic throughout: exact-match arithmetic, multiple-choice and
boxed-letter extraction, and boxed/integer answer matching, so no LLM judge is used.
Costs are metered per call against the OpenRouter account usage endpoint; OpenRouter is an aggregator, so this is
account-level usage, not per-provider reconciliation. We itemize per-run metered cost in App.~\ref{app:repro}: the core pillar
experiments total $\approx$\$47, the market-scale realizability and two-regime measurement $\approx$\$111, and the two
third-domain experiments (code, open-ended GPQA) $\approx$\$110, for $\approx$\$270 of reported-experiment cost; total account
usage including all exploratory and superseded iteration was $\approx$\$560 (approximate; see App.~\ref{app:repro}). We report
the itemized experiment figures rather than presenting account-level usage as experiment cost. Baselines: single-cheapest,
single-best (selected in-sample; the optimistic bias \emph{understates} the oracle gain $G=V^o-a_{\mathrm{sb}}$, so our small-$G$ claim is conservative, though it flatters the learned-router comparison), random and random-mixing-at-matched-budget, cost-matched
Self-MoA \citep{li2025selfmoa}, a partition-conditioned oracle and a partition-free per-query oracle, a confidence cascade,
majority vote, and heterogeneous fusion.

\section{Results}\label{sec:results}
All correctness is scored by an answer-anchored grader; an earlier first-letter extractor systematically
mis-scored verbose models (e.g.\ Llama-4-Maverick by $+0.26$ accuracy, mean $|\Delta|=0.05$ across the pool), so we
re-graded all cached model outputs at no additional inference cost and report the corrected numbers throughout.

\begin{table}[h]\centering\small
\begin{tabular}{lcc}
\toprule
Quantity & Saturated multi-domain mix & Hard single-domain (MMLU-Pro) \\
\midrule
Single-best & 0.923 (Opus 4.8) & 0.850 (Sonnet 4.6) \\
Oracle (per-query) & 0.967 & 0.970 \\
\textbf{Oracle gain $G$ (95\% CI)} & \textbf{0.044} $[0.027,0.062]$ & \textbf{0.120} $[0.075,0.155]$ \\
mean off-diagonal $\rho$ & 0.464 & 0.382 \\
within-family $\rho$ & \textbf{0.528} & \textbf{0.402} \\
cross-family $\rho$ & 0.459 & 0.380 \\
\bottomrule
\end{tabular}
\caption{Pillar A in two regimes (re-graded; $G$ with $2000$-resample query-bootstrap 95\% CIs, $N{=}120$--$200$). $G>0$ with
both CIs excluding zero confirms $Q$ is not row-dominated (Lem.~\ref{lem:env}), and $G$ is larger on the harder, more
dispersed regime. Within-family $\rho$ exceeds cross-family in both regimes, with a larger gap on the multi-domain mix ($0.069$
vs $0.022$): family specialization shows most across domains, consistent with the shared-provider correlation of
\citet{kim2025correlated}.}\label{tab:A}
\end{table}

\paragraph{Pillar A (confirmed, both regimes).} Oracle gain $G>0$ with bootstrap CIs excluding zero in both regimes
($0.044$ saturated, $0.120$ hard; Table~\ref{tab:A}): routing helps, \emph{modestly because the frontier agrees}, and
more on the harder, more dispersed regime, the dispersion signature the theory predicts. Within$>$cross family $\rho$
holds in both regimes (gap larger on the multi-domain mix). The cost--quality frontier is populated by cheap models
(Fig.~\ref{fig:cq}). A deployable learned router captures essentially none of $G$, and this holds against router strength: a
held-out TF-IDF$+$domain logistic attains $0.906$ vs.\ single-best $0.901$ on the mix ($9\%$ of $G$, 95\% CI $[-0.67,0.50]$),
and, to rule out a weak-baseline artifact, three substantially stronger routers fare no better. A gradient-boosted per-model
correctness predictor on word$+$char TF-IDF features captures $-0.09$ of $G$; a direct multiclass best-model predictor
captures $-1.27$ (it actively hurts); and a deployment-realistic \emph{LLM-as-router} (GPT-5-mini shown each query and a capsule
of every model's strengths, asked to pick the best) routes to single-best on $100\%$ of queries and captures exactly $0$ of $G$
(\texttt{router\_strong.py}, \texttt{router\_llm.py}). All four routers are evaluated on the $15$-model saturated mix, the
pool whose per-query prompts we logged; the market-scale ($67$-model) and GPQA matrices store outcomes but not prompts, so no
router was trained there and the market-scale routing statement rests on the certificate of Prop.~\ref{prop:cert}, not an
end-to-end routing run. This scope limit we state plainly rather than paper over. Against the
cost-aware oracle (optimal) frontier all sit far below the upper bound (Fig.~\ref{fig:router}). The realizable routing gain is thus
near zero \emph{not because the router is weak} but because the prompt carries little signal about which model will be the one
that is right when the frontier disagrees: the small oracle bound is itself largely unreachable.

\paragraph{Tail co-failure: a realizability certificate and an empirical finding (\S\ref{sec:realiz}).}\label{sec:realiz}
Why is the realizable gain near zero, and the oracle gain itself small? Both are governed by how often the pool fails
together. The next proposition makes the ceiling exact and turns it into a \$0 pre-deployment test; we then report what the
tail looks like on the frontier.

\begin{proposition}[Ceiling, gain localization, and a realizability certificate]\label{prop:cert}
Let $\beta=\Pr_t[\text{all } m \text{ wrong}]$, $a_{\mathrm{sb}}=\max_i\bar q_i$, and $i^\star=\arg\max_i\bar q_i$.
\emph{(i)~Ceiling.} Any selection policy---a router, a (weighted) vote, or a cascade, whose output is almost surely one of
the members' answers---has accuracy at most $1-\beta$, attained by the per-query oracle, so the maximum gain over single-best
is exactly $\Delta^{\mathrm{ceil}}=(1-\beta)-a_{\mathrm{sb}}$. \emph{(ii)~Gain localization.} $G=V^o-a_{\mathrm{sb}}=\Pr_t[\text{single-best wrong}]-\beta$,
supported entirely on the \emph{resolvable} mass (non-unanimous, single-best wrong); the co-failure tail $\beta$ contributes
nothing to $G$. \emph{(iii)~Certificate.} From $n$ i.i.d.\ queries with $K$ all-wrong, let $\beta_{\mathrm{lo}}(K,n,\delta)$ be
the Clopper--Pearson lower confidence limit; then with probability $\ge1-\delta$ every selection policy obeys
$\mathrm{Acc}-a_{\mathrm{sb}}\le(1-\beta_{\mathrm{lo}})-a_{\mathrm{sb}}$. If this certified bound falls below the orchestration
overhead, no policy in the class can pay for itself---a \$0 test ($a_{\mathrm{sb}}$ replaceable by its own confidence bound).
\end{proposition}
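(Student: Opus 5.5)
The plan is a pointwise argument per query, integrated over queries, with the statistical part handled by the standard Clopper--Pearson coverage guarantee. Fix a query $x$ and write $Z_i(x)\in\{0,1\}$ for the correctness of model $i$. For part (i), a selection policy outputs, almost surely, some member answer $A_{J}(x)$. Here $J$ may be random and may depend on $x$, on the other members' answers, on confidences, or on vote tallies. The policy is correct on $x$ only if $Z_J(x)=1$, which forces $\max_i Z_i(x)=1$. Hence $\Pr[\text{policy correct}\mid x]\le 1-\mathbf{1}[\text{all wrong on }x]$. Taking expectations over $x$, and over any internal randomness of the policy, gives $\mathrm{Acc}\le 1-\beta$.

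One wrinkle needs care. The sentence ``output is a member's answer'' must mean that correctness of the output coincides with correctness of the chosen member. I will state this as the standing assumption, namely that grading is a deterministic function of the final answer string. Under that assumption a weighted vote that returns a tallied answer counts as a selection as well. Attainment is immediate: the per-query oracle picks any $i$ with $Z_i(x)=1$ whenever one exists, so its accuracy is $\Pr[\max_i Z_i=1]=1-\beta$. Subtracting $a_{\mathrm{sb}}$ then gives $\Delta^{\mathrm{ceil}}$.

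For part (ii) I will decompose the query space into three disjoint events:
\begin{itemize}
\item $E_1=\{Z_{i^\star}=1\}$, where the single best is right;
\item $E_2=\{Z_{i^\star}=0,\ \max_i Z_i=1\}$, where the single best is wrong but the query is resolvable;
\item $E_3=\{\text{all wrong}\}$.
\end{itemize}
The oracle value is $V^o=\Pr[E_1]+\Pr[E_2]$. We also have $a_{\mathrm{sb}}=\Pr[E_1]$, because $\bar q_{i^\star}$ is the marginal correctness rate of $i^\star$ when utility is binary accuracy; I will note this as the interpretation of $q_i$ needed here. It follows that $G=\Pr[E_2]=\Pr[Z_{i^\star}=0]-\Pr[E_3]=\Pr[\text{single-best wrong}]-\beta$, since $E_3\subseteq\{Z_{i^\star}=0\}$.

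Two properties of $E_2$ give the localization claim. First, $E_2$ is non-unanimous: some model is right while $i^\star$ is wrong. Second, $E_2$ is disjoint from $E_3$, so the tail $\beta$ contributes nothing to $G$.

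For part (iii), the indicators $\mathbf{1}[\text{all wrong on }x_j]$ over $n$ i.i.d.\ queries are i.i.d.\ Bernoulli$(\beta)$, so $K\sim\mathrm{Bin}(n,\beta)$. The exact Clopper--Pearson lower limit satisfies $\Pr_\beta[\beta_{\mathrm{lo}}(K,n,\delta)\le\beta]\ge 1-\delta$ for every $\beta$, which I will cite as standard. On that event, part (i) yields $\mathrm{Acc}-a_{\mathrm{sb}}\le(1-\beta)-a_{\mathrm{sb}}\le(1-\beta_{\mathrm{lo}})-a_{\mathrm{sb}}$ simultaneously for every selection policy. The key point is that the bound is uniform, because the policy never enters the random event; no union bound over policies is needed.

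The overhead corollary follows directly. If the right-hand side is below the overhead, then every policy's net gain is negative on the same event.

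For the remark that $a_{\mathrm{sb}}$ can be replaced by a confidence bound, I will use a lower bound $a_{\mathrm{lo}}$ on $a_{\mathrm{sb}}$ at level $\delta'$. The accuracies $\mathrm{Acc}$ and $1-\beta$ are themselves fixed, so a union bound gives coverage $1-\delta-\delta'$ for the certified gain bound $(1-\beta_{\mathrm{lo}})-a_{\mathrm{lo}}$. Note that $a_{\mathrm{lo}}$ must be a lower bound here.

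The main obstacle is not computational but definitional. One task is pinning down ``selection policy'' precisely enough to cover cascades and votes, including randomized and adaptive ones. The other is checking that $i^\star$ being chosen in-sample does not break (iii). For the latter, the population statement is unaffected, and I will flag that the empirical $a_{\mathrm{sb}}$ requires its own bound.
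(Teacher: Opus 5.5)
Your proposal is correct and follows the paper's proof essentially step for step. The shared steps are: a selector must be wrong on the all-wrong event; the identity $V^o=1-\beta$ together with $a_{\mathrm{sb}}=\Pr[Y_{i^\star}=1]$ gives $G=\Pr[\text{single-best wrong}]-\beta$; and $K\sim\mathrm{Binomial}(n,\beta)$ with Clopper--Pearson coverage applies uniformly over all policies. Your one divergence is in the right direction: to keep $(1-\beta_{\mathrm{lo}})-a_{\mathrm{sb}}$ an upper bound on the gain, you need a \emph{lower} confidence bound on $a_{\mathrm{sb}}$. The paper's proof says ``upper confidence bound,'' which as written would make the certified quantity smaller than $(1-\beta_{\mathrm{lo}})-a_{\mathrm{sb}}$ and so would no longer bound the gain.
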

\noindent Parts~(i)--(ii) are elementary identities, with one-line proofs (App.~\ref{app:proofs}): on the all-wrong event every member is wrong, so any selector is wrong;
and $G$ rearranges $V^o=1-\beta$. We state them not as deep results but because
part~(iii) turns them into a pre-deployment \$0 instrument, and part~(ii) corrects a tempting
misstatement: a \emph{small} $\beta$ does not by itself imply orchestration cannot
help; it implies a high ceiling. The binding quantity is $\Delta^{\mathrm{ceil}}=(1-\beta)-a_{\mathrm{sb}}$, which is small on
the frontier only because $a_{\mathrm{sb}}$ already sits near the ceiling; the certificate is most informative exactly there.

\noindent\textbf{The empirical finding (measured, not a law).} The statistic the field reports, mean pairwise error
correlation $\rho$, systematically underprices this tail. Fitting a single-factor Gaussian copula to the measured $\rho$
predicts an all-wrong rate $\beta_{\mathrm{sf}}$ far below the observed $\beta$ (Table~\ref{tab:tail}), and the gap persists
when the pool is restricted to one model per provider family (not a same-vendor artifact). The implied tail correlation that
reproduces $\beta$ far exceeds the pairwise value---the body-vs-tail base-correlation smile of Gaussian-copula
portfolio-credit (CDO) models, which we invoke as a known analogy, not a new object. Because $\beta$ rests on few all-wrong
events, we report it with exact Clopper--Pearson intervals; the underpricing factor is the headline, but its magnitude
carries real uncertainty, which the market-scale measurement below narrows.

\begin{table}[h]\centering\small\setlength{\tabcolsep}{3.5pt}
\begin{tabular}{lcc}
\toprule
 & Saturated mix & Hard (MMLU-Pro) \\
\midrule
all-models-wrong rate $\beta$ (95\% CP) & 0.033 [0.019, 0.054] & 0.030 [0.011, 0.064] \\
mean pairwise $\rho$ (\emph{naive Pearson-of-indicators}) & 0.464 & 0.382 \\
$\beta$ predicted by single-factor copula at that $\rho$ & 0.0011 & 0.0050 \\
\emph{naive-Pearson} underpricing (\textbf{overstated}; cf.\ tetrachoric below) & $30\times$ [17, 48] & $6\times$ [2, 13] \\
implied (tail) correlation reproducing $\beta$ & 0.88 & 0.64 \\
realizable router gain (fraction of $G$) & 0.09 (CI spans 0) & $<0$ \\
\bottomrule
\end{tabular}
\caption{Tail co-failure on the 15-model frontier pool (recomputed from the logged re-graded matrices via
\texttt{realizability.py}, \$0; $\beta$ with exact Clopper--Pearson 95\% intervals, $n{=}480$/$200$, all-wrong counts
$k{=}16$/$6$). The $6$--$30\times$ figures use the \emph{naive Pearson-of-indicators} calibration and are \textbf{overstated}---we
retain them only to show the raw gap; the correctly \emph{tetrachoric}-calibrated residual is single-digit ($\approx\!2.5\times$ on
the market-scale MATH-500 tail, Fig.~\ref{fig:realizability}), the order-of-magnitude difference being the calibration artifact we
diagnose in \S\ref{sec:results}. The wide intervals reflect the few all-wrong events, which the market-scale measurement narrows.
The ceiling $1-\beta$ and the near-zero realizable router gain both follow from this tail, not from $\rho$ (Prop.~\ref{prop:cert}).}\label{tab:tail}
\end{table}

\paragraph{The mispricing is a large-pool phenomenon (market scale).} We expand to a $67$-model, $21$-family market pool, the
live OpenRouter frontier (GPT-5.5, Claude Opus 4.8, Gemini 3.1 Pro, Grok-4.3, GLM-5.2, Qwen3.7-Max, DeepSeek V4, Kimi K2.7,
MiniMax M3) down to small open-weights, over the hard benchmarks (Fig.~\ref{fig:realizability}). The load-bearing benchmark is
\emph{MATH-500}, the one domain with enough co-failure events to estimate $\beta$ at all: over the full $67$-model pool ($n{=}330$ fully-covered queries)
all models miss the same problem $\beta=0.052$ of the time, but this rests on only $k{=}17$ all-wrong events (Clopper--Pearson $[0.030,0.081]$, a wide interval). The
single-factor copula must be calibrated with the \emph{tetrachoric} (latent) correlation, not the Pearson correlation of $0/1$
correctness indicators (Prop.~\ref{prop:lambda}), a century-old psychometric point \citep{pearson1900tetrachoric,olsson1979polychoric}
we claim no novelty for, but which the LLM-evaluation literature routinely elides. The measured tetrachoric $\bar\rho=0.78$ predicts $\beta_{\mathrm{sf}}=0.021$, so
the observed tail is $\approx\!\mathbf{2.5\times}$ fatter (bootstrap $90\%$ CI $1.7$--$3.4\times$ over queries, jointly propagating the
all-wrong count and the fitted $\bar\rho$; \texttt{ratio\_uncertainty.json}) than even a correctly-calibrated single-factor model: a real but
\emph{modest} residual common-mode excess (the implied $\rho_{\mathrm{eff}}=0.89$ exceeds the measured $0.78$). \textbf{The residual is
not an artifact of the single-factor restriction.} Fitting the \emph{full} $67{\times}67$ pairwise-tetrachoric correlation matrix
$\Sigma$ (every pair calibrated to its own joint wrong-rate, projected to the nearest PSD matrix) and Monte-Carlo-integrating the
all-wrong event under the resulting Gaussian copula still predicts only $\beta_{\textrm{full-}\Sigma}=0.023$ ($4{\times}10^5$ draws;
\texttt{residual\_decomp.py}), leaving the empirical $0.052$ a $\mathbf{2.25\times}$ excess beyond the \emph{nearest-PSD Gaussian
copula}---the finite-pool signature of a common-mode atom (Props.~\ref{prop:poolbias},~\ref{prop:nonid}), which a Gaussian copula
cannot represent (zero lower tail dependence), not single-factor misspecification. Two caveats keep this exact: the empirical
tetrachoric matrix is indefinite ($26$ negative eigenvalues) and the PSD projection \emph{lowers} the mean calibrated correlation
($0.78\!\to\!0.74$)---which, if anything, \emph{inflates} this ratio; and a non-Gaussian copula with \emph{genuine} lower-tail
dependence does no better on the real data. An exchangeable Clayton copula ($\lambda_L=0.69$), calibrated to the same mean
pairwise co-failure on the full $67$-model matrix, still predicts $\beta=0.026$ versus the empirical $0.052$, a $1.96\times$
residual ($6.3\times$ on MATH-Hard; \texttt{clayton\_real.py}). So the gap is \emph{not} an artifact of the Gaussian's zero tail
dependence. The residual therefore lies beyond any exchangeable pairwise-calibrated copula we can fit, Gaussian \emph{or}
tail-dependent: the signature of a common-mode atom that no pairwise statistic represents. We flag the
calibration trap explicitly, because an earlier version of this analysis fell into it: the \emph{naive} Pearson-of-indicators
calibration ($\bar\rho=0.53$) gives $\beta_{\mathrm{sf}}=0.0016$ and a spurious $32\times$: an order-of-magnitude artifact of the
wrong correlation transform, not a co-failure effect (\texttt{realizability\_tetrachoric.json}). \textbf{The excess is a pool-\emph{size}
effect, not a composition accident.} Resampling pool composition (random $k$-model subsets, $60$ per $k$) the tetrachoric ratio rises
monotonically from $1.0$ at $k{=}2$ to a median $2.5$ ($5$--$95\%$ band $[2.1,2.7]$) at $k{=}67$, with \emph{every} subset showing a
populated tail (\texttt{residual\_decomp.json}): isolating size, not which models, as the driver, exactly as Prop.~\ref{prop:poolbias}
predicts; the newest frontier (GPT-5.5 and peers) still co-fails.
\textbf{The finding replicates on a second, harder open-ended math benchmark}, though both are the \emph{same task family} at two
difficulties, not independent domains. On MATH-Hard (Level-5 MATH; $67$ models, $n{=}298$) the co-failure tail is again populated
($\beta=0.044$, $k{=}13$ all-wrong, CP$[0.023,0.073]$), and the tetrachoric-calibrated single-factor model underprices it by a point
$8.3\times$ ($\bar\rho_{\mathrm{tet}}=0.69$; bootstrap $90\%$ CI $4.5$--$16\times$). We deliberately do \emph{not} read this larger point
ratio as stronger co-failure. MATH-Hard's $\beta$ ($0.044$) is in fact \emph{lower} than MATH-500's ($0.052$); the higher ratio is a
\emph{denominator} effect: its lower fitted $\bar\rho_{\mathrm{tet}}$ shrinks the single-factor baseline $\beta_{\mathrm{sf}}$. Matched to
MATH-500's $\bar\rho=0.78$, MATH-Hard's ratio is $3.3\times$ (\texttt{ratio\_uncertainty.json}), comparable to MATH-500's. The honest
reading is therefore a \emph{consistent single-digit} residual common-mode excess ($\approx\!2.5$--$3.3\times$ at matched $\rho$; point
ratios $2.5$--$8.3\times$, both with wide CIs over $k{=}17$ and $13$ events), replicated \emph{within} open-ended math but not across
domains. The thinner MMLU-Pro tail (one all-wrong event in $124$) we read only as directional, and on multiple-choice GPQA the tail
vanishes ($\beta\!\approx\!0$). \textbf{The co-failure regime generalizes to a third, structurally independent domain.} On
execution-graded competitive programming (\texttt{code\_contests}: $63$ hard problems, rating $1900$--$3500$, each graded against
its \emph{private} $+$ \emph{generated} stress tests under an enforced, Python-fair time limit; \S\ref{app:codegen}), the tail is
populated ($\beta=0.079$, $k{=}5$ all-wrong over $63$, CP$[0.026,0.176]$). The same naive-Pearson trap recurs (Pearson
$\bar\rho{=}0.27$ implies a spurious $17\times$), the tetrachoric single-factor model underprices by $\mathbf{3.1\times}$, and even
the full-$\Sigma$ Gaussian copula leaves a $1.7\times$ residual (\texttt{residual\_decomp.json}): the same common-mode signature as
math. With the harder problems the underpricing is now \emph{statistically resolved}: bootstrap $90\%$ CI $[1.5,6.2]$, excluding
$1$. The honest caveats remain: $k{=}5$ is still a small event base, $18$ models (not $67$), and a strict-but-not-official judge
(App.~\ref{app:codegen}). The signature thus holds across \emph{three} structurally disjoint open-ended domains (two math
families and execution-graded code) and \emph{vanishes} on multiple-choice: co-failure ($\beta>0$), the Pearson trap, a
full-$\Sigma$ residual, and a correlation-excluding-$1$ tetrachoric ratio. The open-ended-versus-multiple-choice split is a
cross-domain phenomenon, not a math artifact.

\begin{figure}[tbp]\centering
\includegraphics[width=0.92\textwidth]{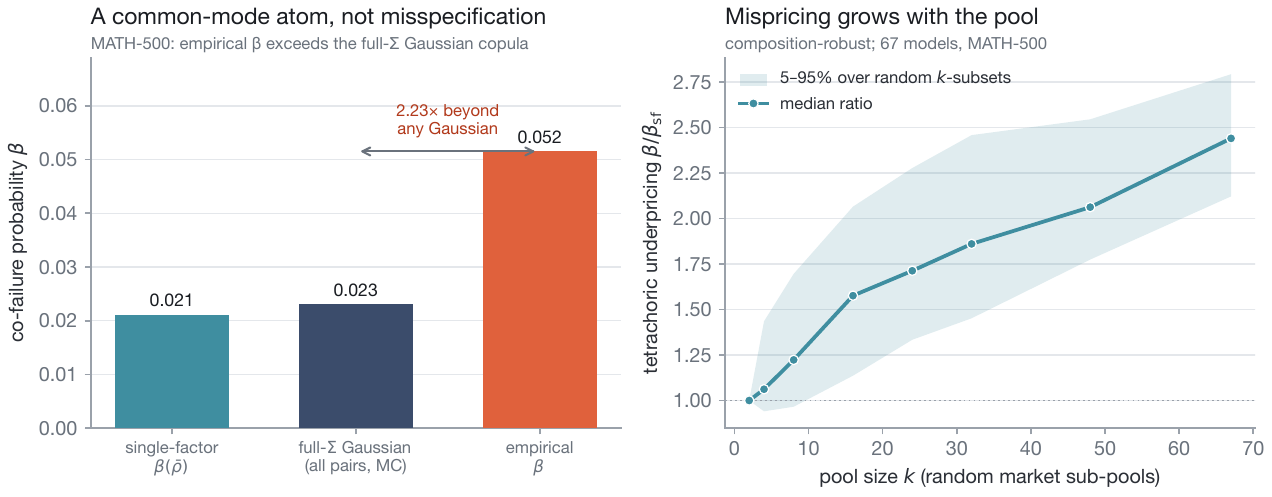}
\caption{\label{fig:realizability}The co-failure residual is a common-mode atom, not copula misspecification (MATH-500, $67$
models, $k{=}17$/$330$). \emph{Left}: three predictions of the all-models-wrong rate against the empirical $\beta=0.052$---the
one-parameter single-factor copula ($\beta(\bar\rho)=0.021$), and the \emph{full} $67{\times}67$ pairwise-tetrachoric Gaussian
copula Monte-Carlo'd over all pairs ($\beta_{\textrm{full-}\Sigma}=0.023$). The empirical tail exceeds even the nearest-PSD
full-$\Sigma$ Gaussian fit by $2.25\times$, whose lower tail is asymptotically independent (Props.~\ref{prop:poolbias},~\ref{prop:nonid}). (The discredited Pearson-of-indicators calibration would put
$\beta_{\mathrm{sf}}=0.0016$, a spurious $32\times$; we exclude it.) \emph{Right}: resampling pool \emph{composition} (random
$k$-model subsets, $60$ per $k$), the tetrachoric underpricing rises monotonically from $1.0$ at $k{=}2$ to median $2.5$
($5$--$95\%$ band $[2.1,2.7]$) at $k{=}67$---size, not which models, drives it. Computed by \texttt{residual\_decomp.py}.}
\end{figure}

The monotone growth is not an artifact of the scan; it is forced by any positive tail dependence, which we now prove.

\begin{proposition}[Pairwise $\rho$ underprices co-failure, with bias growing in pool size]\label{prop:poolbias}
Model errors by a common-shock mixture: each query is \emph{co-hard} with probability $\pi$ (all $m$ models err together) and
otherwise each model errs independently with probability $\alpha_0$. The marginal error rate is $\alpha=\pi+(1-\pi)\alpha_0$ and
the pairwise error correlation is $\bar\rho=[\pi+(1-\pi)\alpha_0^2-\alpha^2]/[\alpha(1-\alpha)]>0$. The true co-failure rate is
$\beta(m)=\pi+(1-\pi)\alpha_0^{\,m}$. Let $\beta_{\mathrm{sf}}(m)$ be the all-wrong rate of the single-factor Gaussian copula
calibrated to $(\alpha,\bar\rho)$. Then (i) $\beta(m)\downarrow\pi>0$ while $\beta_{\mathrm{sf}}(m)\downarrow0$, because a Gaussian
copula with $\bar\rho<1$ has zero lower tail dependence \citep{sibuya1959bivariate,embrechts2002correlation}; hence (ii) the underpricing ratio
$\beta(m)/\beta_{\mathrm{sf}}(m)\to\infty$ and is eventually strictly increasing in $m$, while equalling $1$ at $m=2$. Mean
pairwise $\rho$ is a sufficient statistic for the \emph{bivariate} error law but discards the higher-order tail dependence that
governs joint failure of a large pool; it is exact for pairs and increasingly inadequate as the pool grows.
\end{proposition}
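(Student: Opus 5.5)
I would first pin down the calibrated Gaussian object precisely. Let $Z_i=\sqrt{r}\,F+\sqrt{1-r}\,\varepsilon_i$ with $F,\varepsilon_i$ i.i.d.\ standard normal. Model $i$ errs iff $Z_i\le\tau$, where $\tau=\Phi^{-1}(\alpha)$ matches the marginal and $r\in(0,1)$ is the latent correlation. I choose $r$ so that the bivariate joint error rate $\Pr[Z_1\le\tau,Z_2\le\tau]$ equals the mixture's $\pi+(1-\pi)\alpha_0^2$, which is the same as matching the indicator correlation $\bar\rho$. Since $\Phi_2(\tau,\tau;r)$ is continuous and strictly increasing in $r$, running from $\alpha^2$ at $r=0$ to $\alpha$ at $r=1$, and since the target lies strictly between these two values whenever $\pi\in(0,1)$ and $\alpha_0\in(0,1)$, a unique $r\in(0,1)$ exists. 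The bivariate law of a pair of binary indicators is fixed by the two marginals and the joint-error probability. So at $m=2$ the two laws coincide, $\beta_{\mathrm{sf}}(2)=\beta(2)$, and the ratio equals $1$. This also gives the closing remark that $\bar\rho$ is exact for pairs.

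For (i), I note that $\beta(m)=\pi+(1-\pi)\alpha_0^m$ decreases strictly to $\pi$. On the Gaussian side I condition on $F$: $\beta_{\mathrm{sf}}(m)=\E_F\bigl[p(F)^m\bigr]$ with $p(f)=\Phi\bigl((\tau-\sqrt r f)/\sqrt{1-r}\bigr)\in(0,1)$. This sequence is strictly decreasing in $m$, and it tends to $0$ by dominated convergence because $p(F)<1$ almost surely. Here $r<1$ is essential. It is the concrete form of zero tail dependence, and I would cite \citep{sibuya1959bivariate,embrechts2002correlation} only as context, since the conditioning argument is self-contained. The limit $\beta/\beta_{\mathrm{sf}}\to\infty$ then follows immediately from $\beta(m)\ge\pi>0$.

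The main obstacle is the claim that the ratio is \emph{eventually strictly increasing}. I would write
\[
R(m)=\frac{\pi+(1-\pi)\alpha_0^m}{\E[p(F)^m]}
\]
and compare $R(m+1)/R(m)$. The numerator ratio $\beta(m+1)/\beta(m)$ tends to $1$, with deficit $1-\beta(m+1)/\beta(m)=O(\alpha_0^m)$. For the denominator I would use the Cauchy--Schwarz/Lyapunov log-convexity of moments, $\E[p^{m+1}]/\E[p^m]\le \E[p^{m+2}]/\E[p^{m+1}]$, so the ratio $s_m=\E[p^{m+1}]/\E[p^m]$ is nondecreasing. Since $p$ has essential supremum $1$ (as $F\to-\infty$), I expect $s_m\uparrow1$, and I must show $1-s_m$ decays more slowly than $\alpha_0^m$, i.e.\ only polynomially or logarithmically in $m$. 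The plan for that step is a Laplace-type estimate: the mass of $F$ on which $p(F)>1-\epsilon$ has Gaussian-tail size, so $\E[p^m]$ decays subexponentially in $m$, roughly like $\exp(-c(\log m)^2)$. This gives $1-s_m\gtrsim 1/m$ up to logarithmic factors, which eventually dominates the geometric $\alpha_0^m$ deficit in the numerator. Hence $R(m+1)/R(m)>1$ for all large $m$. If a sharp lower bound on $1-s_m$ proves awkward, a fallback is a cruder statement: $R(m)\to\infty$ together with eventual monotonicity obtained by bounding $\E[p^{m+1}]\le(1-\eta_m)\E[p^m]$ with an explicit $\eta_m\ge c/m^2$.
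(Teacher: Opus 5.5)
Your calibration step, the ratio $1$ at $m=2$, and the limit $\beta_{\mathrm{sf}}(m)\to0$ (by conditioning on the factor and dominated convergence) are the same as in the paper's proof. You also supply two things the paper leaves implicit: existence and uniqueness of the calibrated latent $r\in(0,1)$, via monotonicity of $\Phi_2(\tau,\tau;r)$, and the distinction between that $r$ and the indicator correlation $\bar\rho$.

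The genuine difference is eventual monotonicity. The paper argues that $\beta(m+1)/\beta(m)\to1$, while $\beta_{\mathrm{sf}}(m+1)/\beta_{\mathrm{sf}}(m)$ tends to ``the conditional error probability at the dominating factor value,'' a number below $1$. Your observation that $p(F)$ has essential supremum $1$ shows this limit is actually $1$, because the dominating factor value drifts to $-\infty$. Comparing the two limits therefore decides nothing. Your comparison of rates, an $O(\alpha_0^m)$ deficit in the numerator against a polynomially small $1-s_m$, is what actually proves the claim. Your route repairs that step of the paper rather than merely reorganizing it.

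Two corrections when you write it out.

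First, the decay of $a_m:=\E[p(F)^m]$ is polynomial, not $\exp(-c(\log m)^2)$. We have $\Pr[1-p(F)<\epsilon]=\Phi\bigl((\tau+\sqrt{1-r}\,\Phi^{-1}(\epsilon))/\sqrt r\bigr)=\epsilon^{(1-r)/r+o(1)}$. The bounds $(1-1/m)^m\Pr[1-p<1/m]\le\E[p^m]\le\Pr[1-p<\epsilon]+(1-\epsilon)^m$, with $\epsilon=(\log m)^2/m$, then give $a_m=m^{-(1-r)/r+o(1)}$.

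Second, a decay rate for $a_m$ does not by itself lower-bound $1-s_m$. Use the monotonicity of $s_m$ you already proved: $s_m^{M-m}\le\prod_{j=m}^{M-1}s_j=a_M/a_m$. Take $M=m^2$. Since $a_{m^2}/a_m=m^{-(1-r)/r+o(1)}\to0$, this gives $1-s_m\ge c\log m/m^2$ for large $m$, which beats $C\alpha_0^m$. That is essentially your fallback, now with the needed estimate supplied.
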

\noindent The classical ingredient---a Gaussian/elliptical copula has zero lower tail dependence
\citep{sibuya1959bivariate}, so a common-mode atom (Marshall--Olkin-type shared-failure component;
\citealp{marshall1967multivariate}) cannot be represented by any pairwise calibration---is not ours; the body--vs--tail
underpricing it implies is the base-correlation ``smile'' familiar from Gaussian-copula credit models
\citep{li2000default,donnelly2010devil}. What we own is the \emph{transfer} to LLM orchestration: the co-failure
instantiation, the pool-size-divergence framing, and the empirical measurement that it does not vanish on the real frontier.
It grounds the paper's flagship empirical claim: the measured curve (ratio $\approx1$ at $k{=}2$, growing monotonically with
$k$) has the monotone-divergent shape this predicts. The exchangeable, homogeneous model licenses the mechanism and the sign---any
positive tail dependence forces the divergence---while the heterogeneous magnitude (the specific $1.3\times\!\to\!2.5\times$ tetrachoric on
MATH-500) is empirical. It also says precisely \emph{why} a buyer cannot price orchestration from the reported statistic:
$\rho$ certifies pairwise substitutability, not the tail in which orchestration either pays or does not.

\noindent\textbf{The driver is a common mode, not tail dependence \emph{per se}.} It is tempting to attribute the effect to
the lower tail-dependence coefficient $\lambda_L$, but that is false, and the distinction is the substance of the result. What
pairwise $\rho$ misses is a \emph{common-mode atom}: a positive-probability event of joint failure, $\beta_\infty:=\lim_m\beta(m)>0$,
that the pairwise correlation underweights. Smooth tail dependence that is \emph{already reflected} in $\rho$ does not produce the
effect. We verified the dichotomy in a logged simulation (\texttt{copula\_dichotomy.py}): an exchangeable Clayton copula
(Archimedean, $\lambda_L=2^{-1/\theta}$) yields only \emph{bounded} underpricing under the same binary-$\rho$ pipeline---$4.0\times$
at $m{=}53$ for $\lambda_L{=}0.71$, and \emph{smaller} ($2.6\times$) for larger $\lambda_L{=}0.84$, because higher $\lambda_L$
raises the pairwise $\rho$ the single-factor model then absorbs---whereas a common-shock mixture with the same marginals but a
rare shared-failure atom ($\beta_\infty{=}0.05$) drives the ratio past $10^7\times$. The
orchestration-relevant object is thus the multivariate co-failure floor $\beta_\infty$, which no single pairwise number can
identify; this both sharpens the certificate of Prop.~\ref{prop:cert} and explains why the gap \emph{widens} with pool size
(pairwise $\rho$ saturates while $\beta_\infty$ does not). We state the underlying non-identification exactly.

\begin{proposition}[Non-identification of the co-failure floor; specialization of a classical Fr\'echet-class fact]\label{prop:nonid}
For $m\ge3$ the co-failure rate $\beta=\Pr[\text{all }m\text{ wrong}]$ is \emph{not} a function of the pairwise error law:
there exist joint distributions over $\{0,1\}^m$ with identical one- and two-dimensional marginals---hence identical marginal
error rates and identical pairwise correlations, Pearson \emph{and} tetrachoric---yet different $\beta$. Consequently no
statistic computed from pairwise correlations, a single-factor copula included, can identify $\beta$ or the large-pool floor
$\beta_\infty$; the common-shock atom and a matched-$\bar\rho$ Gaussian dependence (Prop.~\ref{prop:poolbias}) are precisely the
two extremes of this ambiguity ($\beta_\infty>0$ versus $\beta_\infty=0$ at identical pairwise law).
\end{proposition}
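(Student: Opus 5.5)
The proof is a construction. For $m=3$ we exhibit two laws on $\{0,1\}^3$ that have the same one- and two-dimensional marginals but different probabilities of the all-ones (all-wrong) vertex. We then lift the construction to general $m\ge3$, and finally read off the consequences for pairwise statistics.

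\emph{Base case $m=3$.} Encode error indicators $E_i\in\{0,1\}$. Define the signed function $\eta(e)=(-1)^{e_1+e_2+e_3+1}$ on $\{0,1\}^3$; it is $+1$ at $(1,1,1)$. Summing $\eta$ over any free coordinate with the other two fixed gives zero, because $\eta$ is a product of $\pm1$ factors, one per coordinate, each summing to zero. So $\eta$ annihilates every one- and two-dimensional margin. Take $P_0$ to be the independent law with $\Pr[E_i=1]=\alpha\in(0,1)$, which has full support, and set $P_\varepsilon=P_0+\varepsilon\eta$. Since $\min_e P_0(e)>0$, the function $P_\varepsilon$ is a valid probability law for $|\varepsilon|\le\min_e P_0(e)$. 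All $P_\varepsilon$ share the one- and two-dimensional marginals, yet $\beta(P_\varepsilon)=\alpha^3+\varepsilon$ varies with $\varepsilon$.

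\emph{Extension to $m>3$.} Let models $1,2,3$ follow $P_\varepsilon$ and models $4,\dots,m$ be independent of them, with error rate $\alpha$. The one- and two-dimensional marginals are unchanged, and $\beta=(\alpha^3+\varepsilon)\alpha^{m-3}$ still depends on $\varepsilon$. Alternatively, apply the same perturbation $\prod_i(-1)^{e_i+1}$ directly in dimension $m$; it annihilates all margins of order $<m$.

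\emph{Consequences.} Pearson correlations of the indicators depend only on the pairwise law, and so do tetrachoric correlations, since they are obtained by inverting the bivariate normal orthant probability given the two marginal thresholds. Any statistic built from these quantities, including the calibrated single-factor $\beta_{\mathrm{sf}}$, is therefore constant across the family $P_\varepsilon$, while $\beta$ is not. This gives the non-identification claim.

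\emph{The extremes claim for $\beta_\infty$.} Here we use Prop.~\ref{prop:poolbias}. The common-shock law has $\beta_\infty=\pi>0$. A Gaussian copula with the matched tetrachoric $\bar\rho<1$ has the same marginals and, pairwise, the same bivariate orthant probabilities, because tetrachoric calibration matches each joint wrong-rate exactly. Yet by part (i) of Prop.~\ref{prop:poolbias}, its $\beta_{\mathrm{sf}}(m)\to0$.

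The step needing the most care is this last one. Calibrating the single-factor copula to the pair $(\alpha,\bar\rho)$ by the tetrachoric inversion must reproduce the common-shock law's bivariate joint error probability $\pi+(1-\pi)\alpha_0^2$ exactly. I would verify this by noting that, for exchangeable pairs, the bivariate binary law is determined by $\alpha$ and $\Pr[E_i=E_j=1]$. Choosing the latent correlation $r$ so that $\Phi_2(\Phi^{-1}(\alpha),\Phi^{-1}(\alpha);r)$ equals this joint probability is possible by continuity and monotonicity of the orthant probability in $r$ on $[0,1)$, since the target lies strictly between $\alpha^2$ and $\alpha$.

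The main routine check is the validity bound on $\varepsilon$, which is immediate from full support.
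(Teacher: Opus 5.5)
Your proof is correct and is essentially the paper's argument: the paper's two exchangeable triples $(q_0,q_1,q_2,q_3)=(\tfrac14,0,\tfrac34,0)$ and $(0,\tfrac34,0,\tfrac14)$ are exactly your $P_0-\tfrac18\eta$ and $P_0+\tfrac18\eta$ at $\alpha=\tfrac12$. Like you, the paper pads with independent coordinates for $m>3$ and appeals to the zero lower tail dependence of Prop.~\ref{prop:poolbias} for the $\beta_\infty$ extremes, though your version covers every $\alpha\in(0,1)$ and makes explicit the tetrachoric calibration step, which the paper's proof of Prop.~\ref{prop:poolbias} leaves implicit.
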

\noindent We claim no novelty for the mathematics: the proof (App.~\ref{app:proofs}) is the classical fact that low-order
marginals underdetermine a joint on $\{0,1\}^m$ for $m\ge3$ (the Fr\'echet class has positive dimension;
\citealp{fontana2018representation}), instantiated for the co-failure cell. The consequence for practice is that pairwise $\rho$ cannot, even in principle, see the
quantity that caps orchestration, so a direct estimate of $\beta$ (with the certificate of Prop.~\ref{prop:cert}), not any function of $\rho$, is
the right instrument.

\paragraph{Two regimes across three domains: either the ceiling binds, or it does not.} The gain-localization identity
$G=\Pr[\text{single-best wrong}]-\beta$ says orchestration headroom can be foreclosed in two opposite ways, realized across our
hard benchmarks (Table~\ref{tab:regimes}). On open-ended mathematics (MATH-500) the \textbf{ceiling binds}: a
real co-failure tail ($\beta=0.052$, $k{=}17$) caps every policy at $1-\beta$, which a correctly tetrachoric-calibrated single-factor model still underprices $\sim\!2.5\times$, and little
gain is even available. On graduate-level science (GPQA-Diamond, $52$-model complete-coverage subset) the picture \textbf{inverts}: all-models-wrong is
indistinguishable from zero ($0$ all-wrong on $130$ covered queries; $95\%$ Clopper--Pearson upper bound $0.03$), so the ceiling
is effectively open---and yet the oracle gain is \emph{larger}, $G=0.15$, of which the identity
certifies that essentially every point is resolvable disagreement, not co-failure. We did not train a router here---the
market/GPQA matrices log no prompts---so whether one could capture this gain is open; what the certificate shows is that the
gap is bounded by \emph{routing regret}, not by the tail, the opposite of the math regime. Execution-graded code
(App.~\ref{app:codegen}) sits in the \emph{ceiling-bound} regime alongside math ($\beta=0.079$), so the pattern holds across
three structurally disjoint open-ended domains: co-failure tracks \emph{open-endedness}. A multiple-choice guess floor over a
broad pool tends to make joint failure rare, though not impossible (MMLU-Pro shows a lone all-wrong event). So the operative
question is not ``how correlated are the models'' but ``which regime is this workload in,'' which pairwise $\rho$ cannot answer
and the certificate can (Fig.~\ref{fig:regimemap}). \textbf{The regime is set by format, not content---a content-controlled test.} We re-ran the
\emph{same} GPQA-Diamond questions as free-response (options stripped), graded by a $5$-judge LLM panel (pairwise
$\kappa=0.73$--$0.92$, substantial-to-near-perfect agreement; App.~\ref{app:opengpqa}). Holding the science content fixed and
changing only the \emph{format} \textbf{flips the regime}: on the models and questions common to both runs, mean accuracy falls
$0.66\!\to\!0.51$ (best-model $0.91\!\to\!0.77$), and a co-failure tail \emph{opens}---$\beta=0.127$ ($k{=}10$ over $79$
fully-judged questions, CP$[0.062,0.220]$), comparable to math and code, where the multiple-choice version had $\beta\approx0$. So co-failure tracks \emph{open-endedness} itself, not subject matter, and it
appears even on an LLM-judged generative task, not only programmatically-verifiable ones---the strongest evidence that the
open-ended/multiple-choice split is real and not a domain confound (Fig.~\ref{fig:flip}).

\begin{figure}[t]\centering
\includegraphics[width=0.94\textwidth]{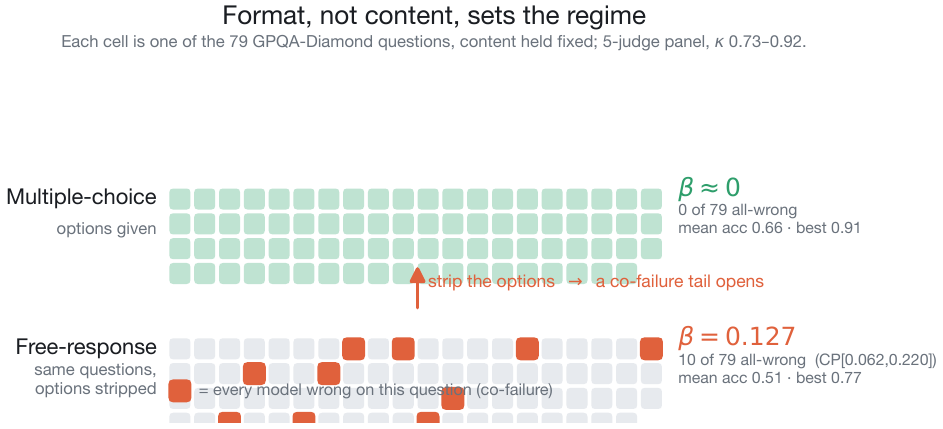}
\caption{\label{fig:flip}\textbf{Format, not content, sets the regime.} The same $79$ GPQA-Diamond questions, asked
multiple-choice (top) and free-response (bottom; options stripped, $5$-judge panel, $\kappa\,0.73$--$0.92$). Each cell is one
question; an orange cell is one on which \emph{every} model is wrong. Changing only the format opens a co-failure block of
$10/79$ ($\beta{=}0.127$, CP$[0.062,0.220]$) where multiple-choice had none ($\beta\approx0$), while mean accuracy falls
$0.66\!\to\!0.51$. Recomputed from the committed open-ended outcome matrix (\texttt{matrix\_marketGPQAOPEN}).}
\end{figure}

\begin{table}[h]\centering\small\setlength{\tabcolsep}{5pt}
\begin{tabular}{lccc}
\toprule
 & MATH-500 (math) & code\_contests (code) & GPQA-Diamond (science) \\
\midrule
models / queries & 67 / 330 & 18 / 63 & 52 / 130 \\
single-best & 0.836 & 0.825 & 0.846 \\
per-query oracle & 0.948 & 0.921 & 1.000 \\
oracle gain $G$ & 0.112 & 0.096 & \textbf{0.154} \\
all-models-wrong $\beta$ & \textbf{0.052} ($k{=}17$) & \textbf{0.079} ($k{=}5$) & $<0.03$ ($0/130$) \\
mean pairwise $\rho$ & 0.53 & 0.27 & 0.25 \\
$\rho$ underprices $\beta$ by (tetrachoric) & $\mathbf{2.5\times}$ & $\mathbf{3.1\times}$ (CI $[1.5,6.2]$) & --- ($\beta\!\approx\!0$) \\
\textbf{regime} & \textbf{ceiling-bound} & \textbf{ceiling-bound} & \textbf{realizability-bound} \\
\bottomrule
\end{tabular}
\caption{Three structurally disjoint open-ended domains, two opposite ways orchestration headroom is foreclosed (2026 frontier
market pool: MATH-500 on the full $67$ models; execution-graded code on $18$ models / $63$ reference-certified problems
(App.~\ref{app:codegen}; $\beta$ CP $[0.026,0.176]$, underpricing CI $[1.5,6.2]$ now excludes $1$, though $k{=}5$ is still small);
GPQA on the $52$-model complete-coverage subset---the newest reasoning models' GPQA cells were incomplete at
the budget cap, $\beta$ there indistinguishable from $0$). Where the ceiling
binds (math), pairwise $\rho$ underprices the binding co-failure tail; where it is slack (science), a large oracle gain is pure
resolvable disagreement a deployable router still misses. Pairwise $\rho$ identifies neither regime.}\label{tab:regimes}
\end{table}

\begin{figure}[t]\centering
\includegraphics[width=0.97\textwidth]{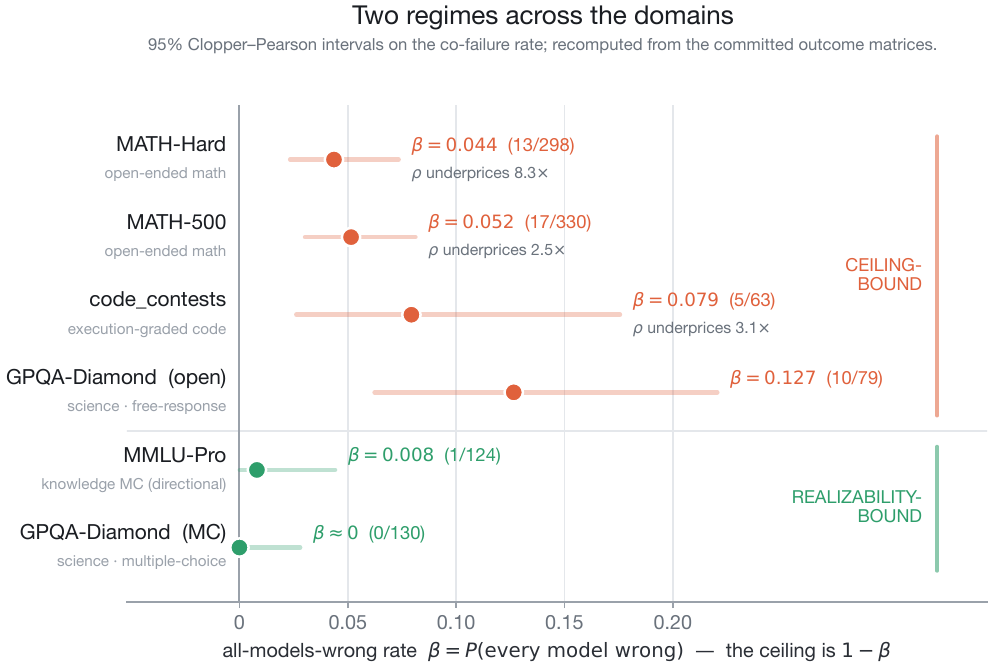}
\caption{\label{fig:regimemap}\textbf{Two regimes across the domains.} All-models-wrong rate $\beta$ per domain with $95\%$
Clopper--Pearson intervals, recomputed from the committed outcome matrices. \emph{Ceiling-bound} domains (open-ended math and
code, free-response GPQA) carry a co-failure tail $\beta>0$ that caps every selection policy at $1-\beta$ and that pairwise
$\rho$ underprices by $2.5$--$8.3\times$ (tetrachoric); \emph{realizability-bound} domains (multiple-choice GPQA, MMLU-Pro) have
$\beta\approx0$, so the oracle gain is pure resolvable disagreement a router could in principle capture. The same GPQA-Diamond
content sits in \emph{both} regimes depending only on answer format (Fig.~\ref{fig:flip}).}
\end{figure}

\paragraph{Pillar B (two complementary findings).} \emph{(1) Naive diversity is a liability.} On all $\binom{15}{3}=455$
three-model triplets, regressing the unweighted majority-vote gain over the best member on $\rho$ with accuracy-headroom control and
model-clustered (leave-one-model-out jackknife) inference, the mean vote gain is negative ($-0.10$ hard, $-0.02$
saturated, robust across the jackknife): mixing unequal-quality models lets diverse-but-weaker members outvote a strong one. The
finer prediction that the gain rises with $1-\rho$ has a positive point estimate (slope $+0.13$ hard) but is not significant
under model-clustering (jackknife 95\% CI $[-0.07,+0.34]$; the naive i.i.d.\ bootstrap CI is $\approx\!3\times$ too narrow), so we
report it as suggestive (Fig.~\ref{fig:rho}). This refutes the ``more diverse $\Rightarrow$ better fusion'' intuition and matches
\citet{li2025selfmoa,kuncheva2003}. \emph{(2) At matched quality, the diversification mechanism is supported in one regime.} We test the diversification
theorem in its valid regime by contrasting Self-MoA (distinct samples of the single best model; high intra-model error correlation
$\rho{=}0.80$) against heterogeneous fusion over an accuracy-matched 6-model band (members $0.74$--$0.865$; lower inter-model
correlation $\rho{=}0.42$, estimated on a disjoint sample split), with $S{=}9$ so Self-MoA scales on distinct draws rather
than recycling a fixed budget (Fig.~\ref{fig:eqq}). At the \emph{information-fair} comparison ($k{=}3$, equal distinct draws per
side), the low-correlation heterogeneous ensemble beats Self-MoA at $k{=}3$. Across $60$ resamplings of the sample-to-split partition
the gain averages $+0.027$ (range $+0.010$ to $+0.050$; positive in all $60$), so the \emph{direction} is robust while the
magnitude is partition-sensitive. The $+0.055$ a single favourable partition gives (query-bootstrap CI $[+0.025,+0.090]$) is the
upper end of this spread, and an alternative aggregation gives $+0.025$ near the mean; we therefore report the partition-averaged
$+0.027$ and treat the mechanism as supported in one regime, not established (\S\ref{sec:limits}). This is consistent with the
diversification limit's core prediction: lower inter-model error correlation buys larger diversifiable gains at matched quality. The advantage also shrinks as $\rho$ rises, the direction
$k^\ast(\rho)$ predicts. On the higher-correlation MATH-500 regime ($\rho_{\mathrm{inter}}{=}0.59$) it falls to $+0.020$ (not
significant). Across up to 502 matched-quality sub-bands of a 9-model band, the accuracy-controlled gain-vs-$\rho$ slope is
robustly negative (the diversification-limit sign, reversing the unmatched pool's positive slope) and stable across 6- and
9-model pools. It is not significant under model-clustering even with 9 clusters: the sign is real but small against
model-level variance (Fig.~\ref{fig:kstar}). We report the contrast in the Self-MoA frame, where $\rho_{\mathrm{intra}}$ and
$\rho_{\mathrm{inter}}$ are distinct estimands. The sensitivity $\lambda$ is now pinned down as a decision-rule Jacobian
(Prop.~\ref{prop:lambda}); fitting it across $\rho$ levels on data, to predict $k^\ast$ out of sample, remains future work.

\paragraph{Pillar C (confirmed, with caveats).} With $L=$GPT-5-nano ($a_L=0.748$ via $5$-sample self-consistency), $H=$Opus~4.8
($a_H=0.921$), and a self-consistency verifier ($\mathrm{AUC}=0.899$; near-binary at $k_c{=}5$), the collapse identity
\eqref{eq:collapse} is confirmed directly (Fig.~\ref{fig:casc}): averaged over $20$ noise-injection seeds, the cascade's
advantage over random mixing falls monotonically toward zero ($0.121\!\to\!0.012$, seed std $\le0.005$) as the verifier
degrades, while the injected-noise verifier AUC falls $0.899\!\to\!0.510$ (non-monotone at low injection, as expected; we
therefore plot the advantage against AUC, not against injection level). The volume ceiling is $1-a_L/a_H=0.188$. At the unconstrained optimum the cascade collapses to the
$L$ corner (it merely matches $L$ on dollars-per-correct); its dominance over $H$-only holds in the quality-constrained band
$q\in(a_L,a_H]$. A control replacing $H$ with Mistral-Large removes dominance, but because it lowers both $a_H$ and $H$'s price it
confounds the tail-edge effect with price, so we report it as suggestive (cf.\ \citealp{jitkrittum2023cascade}). We replace the
in-sample optimism with a $5$-fold \textbf{held-out} evaluation: choosing the deferral threshold on the train folds, the cascade
still beats random-mixing-at-matched-budget by $+0.114$ accuracy on the held-out fold (cross-fold sd $0.010$), with held-out
confidence AUC $0.899$---the dominance is not an in-sample artifact (\texttt{cascade\_heldout.py}). The one remaining cascade gap
is the optimal two-model deferral upper bound of \citet{jitkrittum2023cascade}, which requires $H$'s confidence (unlogged in our
matrices); we flag it rather than claim it.

\paragraph{Optionality under churn (secondary).} A separate observational study on the 2024--2026 release timeline (frontier
cost per unit capability fell $\approx\!14\times$) is not load-bearing for the $\beta$/$\rho$ result and is deferred to
App.~\ref{app:churn}.

\begin{figure}[tbp]
\begin{minipage}{0.49\textwidth}\centering
\includegraphics[width=\textwidth]{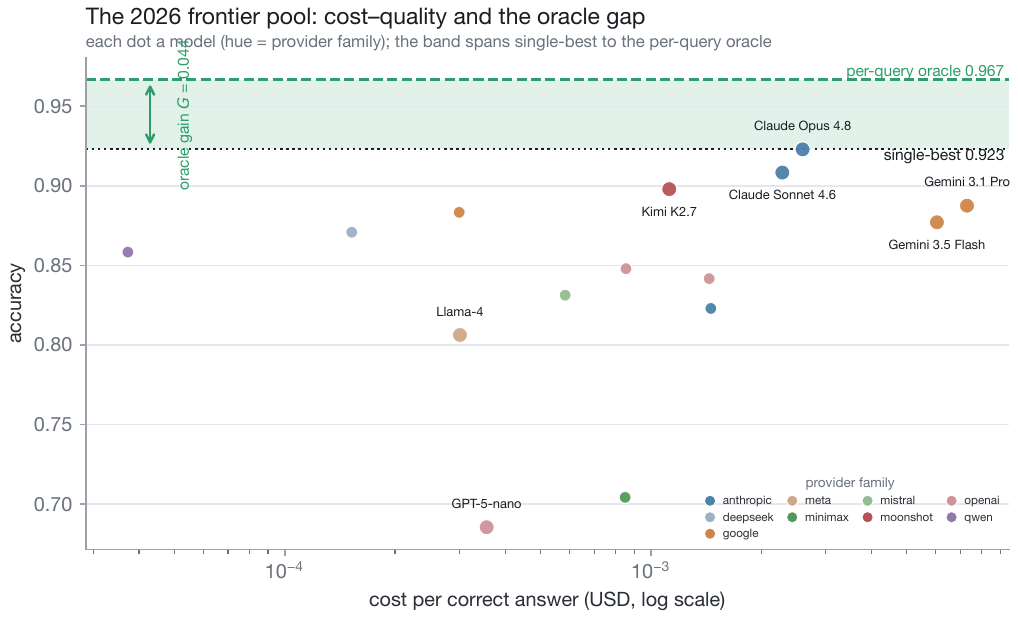}
\caption{\label{fig:cq}Pillar A cost--quality frontier (re-graded): the per-query oracle (green) sits just above single-best
(black); cheap models populate the frontier.}
\end{minipage}\hfill
\begin{minipage}{0.49\textwidth}\centering
\includegraphics[width=\textwidth]{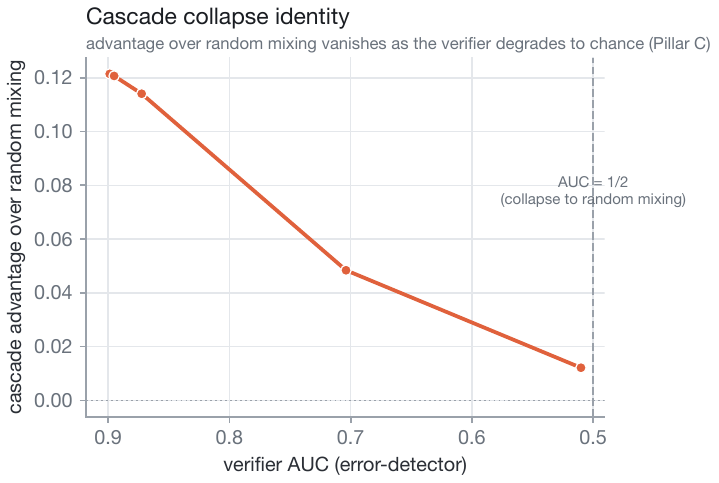}
\caption{\label{fig:casc}Pillar C: the cascade's advantage over random-mixing-at-matched-budget collapses to zero as the
verifier's AUC falls to $\tfrac12$, confirming Eq.~\eqref{eq:collapse}.}
\end{minipage}
\end{figure}
\begin{figure}[tbp]
\begin{minipage}{0.49\textwidth}\centering
\includegraphics[width=\textwidth]{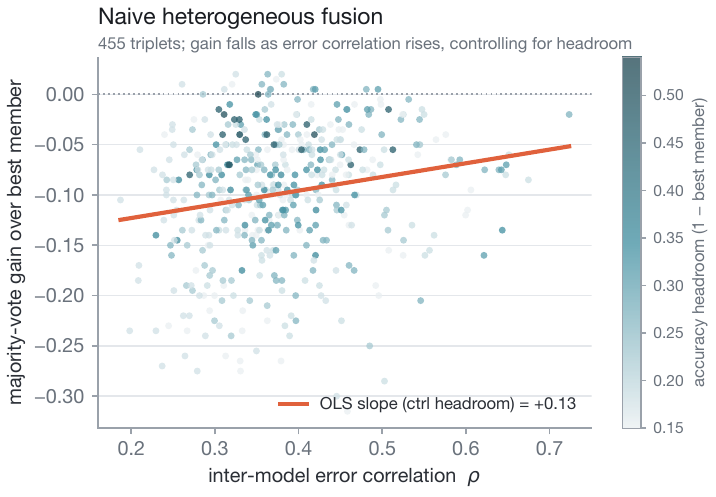}
\caption{\label{fig:rho}Pillar B, 455 triplets: unweighted majority-vote gain over the best member vs.\ $\rho$; gains are mostly
negative (robust), the positive slope not significant under model-clustering.}
\end{minipage}\hfill
\begin{minipage}{0.49\textwidth}\centering
\includegraphics[width=\textwidth]{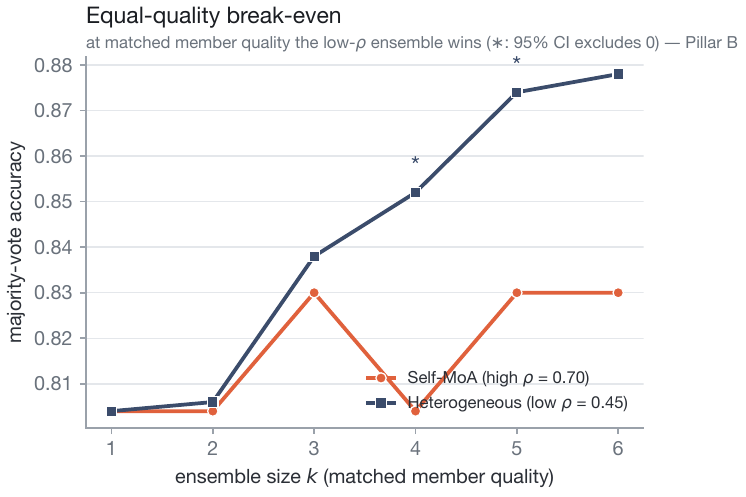}
\caption{\label{fig:eqq}Equal-quality break-even ($S{=}9$, matched 6-model band): low-$\rho$ heterogeneous fusion ($\rho{=}0.42$)
vs.\ high-$\rho$ Self-MoA ($\rho{=}0.80$), both on distinct draws so Self-MoA scales without recycling samples. Under the pre-registered
distinct-draw aggregation the heterogeneous ensemble beats Self-MoA from the information-fair $k{=}3$ onward ($\ast$:
query-bootstrap 95\% CI excludes zero), supporting the diversification mechanism at matched quality in this regime.}
\end{minipage}
\end{figure}
\begin{figure}[tbp]\centering
\includegraphics[width=0.55\textwidth]{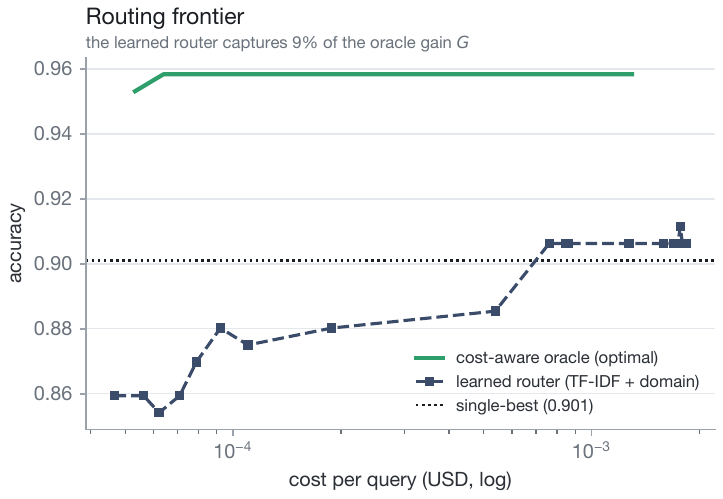}
\caption{\label{fig:router}Pillar A, realizable routing: a held-out learned router (TF-IDF$+$domain) vs.\ the cost-aware oracle
(optimal) frontier and single-best, on the multi-domain mix. The learned router barely exceeds single-best (captures $\sim$9\% of
$G$, CI spans zero) and lies well below the optimal frontier; realizable routing gain is near zero on the 2026 frontier.}
\end{figure}

\section{Discussion}\label{sec:disc}
\textbf{One allocation problem on two timescales.} The results are not separate vignettes but one allocation problem viewed
at two timescales. Within a release epoch, prices and the pool are fixed and the buyer solves the static allocation of
App.~\ref{sec:A}--\ref{sec:C}: a budget-priced assignment with value $V(B)$ and shadow price $\lambda_B$ (Prop.~\ref{prop:duality}),
capped by the realizability ceiling (Prop.~\ref{prop:cert}). Across epochs, frontier releases arrive and the buyer holds an
option on the next pool; the option value of breadth (App.~\ref{app:churn}) is the continuation value attached to that stage
problem. We do not claim a single solved Bellman system---the churn primitives $(\nu,\Gamma,v)$ are not derived from the stage
problem's---so this is a decomposition that organizes the results, explaining why the static claims hold within an epoch and the
option value across epochs; the continuation algebra itself is standard renewal/real-options machinery, which we concede. With that:
routing value is a first-moment selection effect that scales with dispersion rather than capability (App.~\ref{sec:A}); fusion value is a
second-moment effect bounded by systematic error and, empirically, realized only under accuracy-matched combination (App.~\ref{sec:B});
cascade value is a decision-theoretic effect equal to the integrated AUC lift of the verifier (App.~\ref{sec:C}). Because all three
shrink as the frontier converges and errors correlate (Cor.~\ref{cor:converge}), the value of a routing layer tracks market churn and
heterogeneity, not the absolute capability of the best model (App.~\ref{app:churn}). The empirical signature is already visible: on the 2026
frontier, oracle gains are small and \emph{naive} fusion is a net liability precisely because today's best models agree---yet once
member quality is matched, lower error correlation still buys a significant gain, so the lever is failure-mode heterogeneity, not count.

\section{Limitations}\label{sec:limits}
Programmatic grading covers verifiable tasks only and is sensitive to answer-extraction heuristics that can mildly penalize verbose
models; open-ended quality would reintroduce judge bias. Saturated benchmarks inflate $\rho$, mitigated but not removed by the hard
regime. The static-price assumptions of App.~\ref{sec:A}--\ref{sec:C} are in tension with the churn of App.~\ref{app:churn}; those claims are
restricted to within-release-epoch validity. The equal-quality assumption is empirically load-bearing (naive heterogeneous voting
hurts); the diversification mechanism is supported at matched quality (\S\ref{sec:results}). Its sensitivity $\lambda$ is now
derived as a decision-rule Jacobian (Prop.~\ref{prop:lambda}), but the empirical fit of $\lambda$ across $\rho$ levels and
out-of-sample prediction of $k^\ast$ remain open, and the matched-quality test rests on one provider-matched band; an
alternative aggregation pipeline gives a smaller, non-significant gain. Inference on the unconditional $\rho$-slope is inconclusive under model-clustering;
$G$ and the block-$\rho$ gap are reported without seed replication. The churn study (App.~\ref{app:churn}) is stylized and
observational. We instantiate a deployable learned router and the cost-aware oracle (optimal routing) frontier from logged outcomes
(\S\ref{sec:results}, Fig.~\ref{fig:router}): the router captures $\sim\!0$ of $G$, answering the routing question in the
negative on this pool. What remains is the optimal cascade-routing policy of \citet{dekoninck2024cascaderouting} as a
cascade-side upper bound (the cascade result is still measured against a naive confidence cascade, not the cascade optimum). Relatedly, our
cascade verifier scores only the cheap model ($L$), whereas the optimal deferral rule conditions on
both models \citep{jitkrittum2023cascade}, so our single-model AUC verifier is a practical, provably-dominated choice. The cascade
threshold is now validated out-of-sample by a $5$-fold held-out evaluation (App.~\ref{sec:C}); what remains on the cascade side is
only the optimal two-model deferral upper bound above. \textbf{External validity} is now supported across three structurally
disjoint open-ended domains---two math families and execution-graded code, where the co-failure signature (populated $\beta$, the
Pearson trap, a full-$\Sigma$ residual, and a tetrachoric underpricing CI that excludes $1$) replicates and inverts on
multiple-choice (App.~\ref{app:codegen})---though the code \emph{magnitude} still rests on $k{=}5$ events under a
strict-but-not-official judge on an $18$-model pool, so the point ratio carries real uncertainty. A precise code magnitude
(official hidden-test judge at scale on the full pool, with logged prompts for an in-domain router) is the one remaining
sharpening. The
remaining items for a top-venue submission are therefore: a tight-ratio code replication; $\ge 3$ seeds and held-out model selection on the matched-quality result, and its
extension across multiple $\rho$ levels to fit $\lambda$ and predict $k^\ast$ out of sample; the optimal cascade-routing
baseline above; and a price-controlled tail-edge experiment. (The option value of breadth is
already measured, not stylized, on a real generational timeline, \S\ref{sec:results}.) The authors' own orchestration workflow ran tiered but
single-provider with unmetered cost/latency and is reported only as a transparency note, never as evidence.

\section{Conclusion}
The field decides whether to orchestrate by reading one number, pairwise error correlation, and that number is blind to the
joint failures that set the ceiling. Treating orchestration as allocation over a correlated, priced, churning pool replaces it
with the right object ($\beta$, the co-failure tail), a \$0 certificate on the achievable gain, and an economics of when the gain
is reachable: a calibrated diversification limit, a cascade calibration boundary, and an option value of breadth that loads on
churn. Empirically, headroom is foreclosed two ways: a binding co-failure ceiling on open-ended tasks, where $\rho$ underprices
the tail, and a slack ceiling on others, where a large oracle gain is resolvable disagreement no deployable router yet captures.
Both say the same thing for practice---on our pool and verifiable tasks, and absent a strong query-level routing signal:
\emph{on open-ended tasks} the best models increasingly fail alike, so the lever is failure-mode dispersion and market churn, not
peak capability or model count. Pairwise correlation will not tell a buyer which lever they hold. Whether this holds on
open-ended generative tasks beyond our verifiable benchmarks is open.

\bibliographystyle{plainnat}
\bibliography{references}

\appendix
\section{Economic scaffolding: routing, diversification, and cascades}\label{app:econ}
This appendix gives the economic scaffolding deferred from the main text (\S\ref{sec:setup-formal}); its tools are standard, and the empirical claims do not rest on it.

\subsection{Routing as priced assignment}\label{sec:A}
\emph{The envelope and dispersion-scaling material below is background, claimed by no one}: the oracle envelope and the
value-of-information gap are textbook \citep{howard1966}, and routing optimality is proven by
\citet{dekoninck2024cascaderouting}. Proposition~\ref{prop:duality} makes ``orchestration
is allocation'' literal: under a dollar budget, routing is a priced assignment with an explicit shadow price.

\begin{lemma}[Envelope and dominance condition; background]\label{lem:env}
For every routing policy $\pi$, $V(\pi)\le V^o:=\E_t\max_i q_i(t)$, attained by the oracle $\pi^o(t)=\arg\max_i q_i(t)$.
Let the oracle gain be $G:=V^o-\max_i\bar q_i\ge 0$. Then $G>0$ iff no model is uniformly best across types
$D$-almost everywhere; equivalently, the quality matrix $Q=[q_i(t)]$ is not row-dominated under $D$.
\end{lemma}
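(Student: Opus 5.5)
The plan is a pointwise comparison followed by integration over $t\sim D$. For the envelope, fix any policy $\pi$ and any type $t$. Since $\pi(\cdot\mid t)$ is a probability vector, $\sum_i \pi(i\mid t)\,q_i(t)$ is a convex combination of the $q_i(t)$, so
\[
\sum_i \pi(i\mid t)\,q_i(t)\le \max_i q_i(t).
\]
Taking $\E_t$ gives $V(\pi)\le V^o$. The oracle $\pi^o(t)=\arg\max_i q_i(t)$ attains this bound pointwise, hence also in expectation. Measurability of the argmax selection is handled by breaking ties with the smallest index, which is a measurable choice because $M$ is finite. Nonnegativity of $G$ then follows by applying the envelope to the constant policy that always plays $i$: this gives $\bar q_i\le V^o$ for each $i$, and so $\max_i\bar q_i\le V^o$.

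For the characterization, let $i^\star$ maximize $\bar q_i$. The key identity is
\[
G = V^o-\bar q_{i^\star} = \E_t\bigl[\max_i q_i(t)-q_{i^\star}(t)\bigr].
\]
The integrand is nonnegative, so $G=0$ iff $q_{i^\star}(t)=\max_i q_i(t)$ for $D$-a.e.\ $t$, that is, iff $i^\star$ is uniformly best a.e. Conversely, suppose some model $j$ is uniformly best a.e. Then $\bar q_j=V^o$, so $j$ attains $\max_i\bar q_i$ and $G=0$.

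Combining the two directions, $G=0$ iff some model is uniformly best a.e. Equivalently, $G>0$ iff no model is uniformly best. Read as ``some row of $Q$ weakly dominates every other row $D$-a.e.,'' this is exactly the row-dominance statement, so it needs no further argument.

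The only delicate step is this converse: the uniformly-best model must be one of the maximizers of $\bar q$, and it is. When several models tie for $\max_i \bar q_i$, the forward direction has to hold for whichever maximizer is chosen. It does, because $G=0$ forces every maximizer to be a.e.\ optimal, and this is what reconciles the choice of argmax in the definition.
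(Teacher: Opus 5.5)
Your proof is correct and follows the paper's argument essentially step for step: you bound the convex combination pointwise and integrate over $t$, you write $G=\E_t[\max_i q_i(t)-q_{i^\star}(t)]$ with a nonnegative integrand, and you note that a uniformly best model $j$ has $\bar q_j=V^o$. Your extra remarks are sound refinements that the paper leaves implicit: measurable tie-breaking, nonnegativity of $G$ via constant policies, and the fact that $G=0$ forces every maximizer of $\bar q$ to be a.e.\ optimal.
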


\begin{proposition}[Budget-constrained routing: shadow price and bang-per-buck rule]\label{prop:duality}
Fix a dollar budget $B$ and let $V(B)=\max_{\pi}\{V(\pi):K(\pi)\le B\}$ over routing policies. This is a linear program
whose dual collapses to a \emph{single scalar} price $\lambda_B\ge0$ on the budget,
\[ V(B)=\min_{\lambda\ge0}\Big\{\lambda B+\E_t\max_i\big[q_i(t)-\lambda c_i\big]\Big\}, \]
because the per-type simplex constraints are absorbed pointwise. Every optimal policy routes type $t$ to
$\arg\max_i[q_i(t)-\lambda_B c_i]$, a per-type \emph{bang-per-buck} rule, mixing only where the budget binds. The value
$V(B)$ is nondecreasing, concave, and piecewise linear, with $V'(B)=\lambda_B$ wherever differentiable; once $B$ exceeds
the oracle's cost, $\lambda_B=0$ and the rule is the unconstrained per-query oracle.
\end{proposition}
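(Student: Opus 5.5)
The plan is to write the budget-constrained problem as an explicit finite linear program and read every claim off Lagrangian duality. Treat types as finitely many (or integrate pointwise for a general $D$). The primal variables are $\pi(i\mid t)\ge0$ with $\sum_i\pi(i\mid t)=1$ for each $t$. The objective $V(\pi)=\E_t\sum_i\pi(i\mid t)q_i(t)$ and the single coupling constraint $K(\pi)=\E_t\sum_i\pi(i\mid t)c_i\le B$ are both linear.

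\textbf{Step 1: dualize the budget.} Dualize only the budget constraint, with multiplier $\lambda\ge0$, and keep the simplex constraints as the primal domain. The Lagrangian is
\[ L(\pi,\lambda)=\lambda B+\E_t\sum_i\pi(i\mid t)\,[q_i(t)-\lambda c_i]. \]
Maximizing over a product of simplices decouples across $t$. A linear function on a simplex is maximized at a vertex, so the inner maximum is $\lambda B+\E_t\max_i[q_i(t)-\lambda c_i]$. This is exactly the claim that the per-type constraints are ``absorbed pointwise.''

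\textbf{Step 2: strong duality.} Strong duality holds by LP duality \citep{bertsimas1997lp}, or by Slater's condition whenever $B>\E_t\min_i c_i$. The edge case $B=\E_t\min_i c_i$ is covered by LP duality, which needs only feasibility and boundedness. The problem is feasible whenever $B\ge \E_t\min_i c_i$, and it is bounded because $q\in[0,1]$. Both together give the displayed min-formula. Let $\lambda_B$ be a dual minimizer. The dual function is convex and piecewise linear in $\lambda$ and tends to $+\infty$ when $B>0$, so a minimizer exists.

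\textbf{Step 3: the bang-per-buck rule.} Complementary slackness, equivalently the saddle-point condition, forces any primal optimum to put mass only on $\arg\max_i[q_i(t)-\lambda_B c_i]$ for $D$-a.e.\ $t$. When the maximizer is unique the routing is deterministic. Mixing is needed only on ties, and there it is chosen to make $K(\pi)=B$ exactly whenever $\lambda_B>0$, since slackness requires $\lambda_B(B-K)=0$. This gives the per-type rule and ``mixing only where the budget binds.''

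\textbf{Step 4: properties of $V(B)$.} By the formula, $V(B)$ is a pointwise minimum over $\lambda$ of functions affine in $B$ with slope $\lambda\ge0$. Hence it is concave and nondecreasing. It is piecewise linear because only finitely many breakpoints of the dual function can be minimizers. Danskin's theorem (the envelope theorem) gives $V'(B)=\lambda_B$ wherever the minimizer is unique, and the superdifferential equals the set of dual minimizers otherwise. Finally, suppose $B\ge K(\pi^o)$ for an oracle $\pi^o$ of Lemma~\ref{lem:env}. Then $\lambda=0$ attains the value $V^o$, which is an upper bound by that lemma, so $\lambda_B=0$ is optimal and the rule reduces to $\arg\max_i q_i(t)$.

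\textbf{Main obstacle.} The only delicate step is the piecewise-linearity and differentiability bookkeeping when $D$ has infinitely many types. There the dual function is convex but need not be polyhedral. I would state piecewise linearity for finite type spaces, which is the empirical setting of finitely many queries. For general $D$ I would fall back on concavity, monotonicity, and supergradients equal to the dual minimizers.
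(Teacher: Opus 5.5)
Your proposal is correct and follows the paper's proof essentially step for step. Both dualize only the budget row and let the simplex constraints split the Lagrangian into per-type maxima. Both invoke LP strong duality (Slater above $\underline B=\min_i c_i$) and read the bang-per-buck support and budget-exhausting mixing off complementary slackness. Both obtain concavity, monotonicity, piecewise linearity and $V'(B)=\lambda_B$ from $V$ being a minimum of affine functions of $B$ plus the envelope theorem.

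One small correction: the dual objective grows like $\lambda(B-\min_i c_i)$, so it is coercive only for $B>\min_i c_i$, not for all $B>0$. At $B=\min_i c_i$ it is eventually constant, and attainment there comes from LP duality instead. Your caveat that piecewise linearity needs a finitely supported type distribution is a genuine sharpening that the paper leaves implicit.
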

\noindent This specializes standard linear-programming duality \citep{bertsimas1997lp} to inference routing. The step
is small but earns the allocation framing: orchestration under a budget is a priced assignment, and $\lambda_B$, the
\emph{shadow price of the inference dollar}, is the price a budget-aware buyer trades quality against (the same dollars-per-correct
currency the cost-aware rules of \S\ref{sec:B}--\ref{sec:C} optimize). With several resources (e.g.\ dollars and latency) $\lambda_B$ becomes a vector and the score is
$q_i(t)-\lambda_B\!\cdot c_i$. We verified the dual price, the bang-per-buck characterization, and concavity numerically.

\noindent\textbf{Dispersion scaling (heuristic).} If, purely as an illustration, the cell qualities $q_i(t)$ were i.i.d.\
$\mathcal N(\mu,s^2)$ across the $m\times|T|$ cells with $|T|$ large relative to $\ln m$, then $V^o\approx\mu+s\sqrt{2\ln m}$
while the best row mean concentrates at $\mu$, giving $G\sim s\sqrt{2\ln m}$: linear in cross-model within-type dispersion
$s$, only logarithmic in pool size $m$. We flag this as heuristic: real errors are block-correlated (\S\ref{sec:results}),
which lowers the rate, but the qualitative message (value from \emph{how} models differ, not how many) is what the
empirics test. A learned router with expected routing regret $R$ attains $V^o-R$ and beats single-best iff $R<G$. As $G$
is monotone in partition fineness, we report a partition-free per-query oracle as the true upper bound.

\subsection{The cost-aware diversification limit}\label{sec:B}
Let model errors satisfy $\Var(e_i)=\sigma^2$ and $\Corr(e_i,e_j)=\rho$, with systematic variance $\tau^2=\rho\sigma^2$
and idiosyncratic variance $(1-\rho)\sigma^2$.

\begin{proposition}[Variance floor (classical)]\label{prop:floor}
Equal-weight fusion of $k$ equicorrelated models has mean-squared error $V(k)=\sigma^2(\rho+\tfrac{1-\rho}{k})\to\rho\sigma^2$.
Under a symmetric single-factor probit with per-model error rate $\alpha$, unweighted majority vote has infinite-ensemble
error floor $\Phi(-\Phi^{-1}(1-\alpha)/\sqrt\rho)$, with limits $0$ ($\rho\to0$) and $\alpha$ ($\rho\to1$).
\end{proposition}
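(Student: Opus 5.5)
The proposition has two claims, and I will prove them separately. The first is the classical equicorrelated variance formula. The second is the infinite-ensemble majority-vote floor under a single-factor probit.

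\textbf{The variance floor.} Write the equal-weight fused error as $\bar e_k=\tfrac1k\sum_{i=1}^k e_i$. Since fusion averages the errors, the fused MSE (taking errors mean zero, or reading $V$ as the variance of the fused error) is
\[
\Var(\bar e_k)=\tfrac{1}{k^2}\Big[\sum_i\Var(e_i)+\sum_{i\ne j}\mathrm{Cov}(e_i,e_j)\Big]
=\tfrac{1}{k^2}\big[k\sigma^2+k(k-1)\rho\sigma^2\big].
\]
This simplifies to $\sigma^2(\rho+\tfrac{1-\rho}{k})$. Letting $k\to\infty$ gives the floor $\rho\sigma^2=\tau^2$: the systematic component never averages out.

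\textbf{The majority-vote floor.} I will use the latent-factor representation. Model $i$ errs iff
\[
Z_i=\sqrt\rho\,F+\sqrt{1-\rho}\,\varepsilon_i>\Phi^{-1}(1-\alpha),
\]
with $F,\varepsilon_i$ i.i.d.\ standard normal. This makes each $Z_i$ standard normal, so the marginal error rate is $\alpha$, and the latent correlation is $\rho$. Set $c=\Phi^{-1}(1-\alpha)$.

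Conditional on $F=f$, the errors are i.i.d.\ Bernoulli with rate
\[
p(f)=\Pr[\varepsilon_i>(c-\sqrt\rho f)/\sqrt{1-\rho}]=\Phi\big((\sqrt\rho f-c)/\sqrt{1-\rho}\big).
\]
By the conditional strong law of large numbers, the fraction of erring members converges a.s.\ to $p(F)$. So the unweighted majority is wrong in the limit iff $p(F)>1/2$, and right iff $p(F)<1/2$. The tie event $p(F)=1/2$ has probability zero because $F$ is continuous.

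Since $p(f)>1/2$ iff $\sqrt\rho f>c$, dominated convergence gives the infinite-ensemble error
\[
\Pr[F>c/\sqrt\rho]=\Phi(-c/\sqrt\rho)=\Phi\big(-\Phi^{-1}(1-\alpha)/\sqrt\rho\big).
\]

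\textbf{The limits.} Assume the regime $\alpha<1/2$, so $c>0$.
\begin{itemize}
\item As $\rho\to0$, $c/\sqrt\rho\to\infty$, so the floor tends to $0$.
\item As $\rho\to1$, the floor tends to $\Phi(-c)=\alpha$.
\end{itemize}
I will state the assumption $\alpha<1/2$ explicitly, since the $\rho\to0$ limit flips to $1$ when $\alpha>1/2$.

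\textbf{Main obstacle.} The only delicate step is exchanging the $k\to\infty$ limit with the expectation over $F$. Pointwise a.s.\ convergence of the majority indicator holds off the null set $\{\sqrt\rho F=c\}$, and the indicator is bounded by $1$, so dominated convergence justifies the exchange. The same argument applies if one restricts to odd $k$ to avoid finite-$k$ ties.
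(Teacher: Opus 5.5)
Your proof is correct and follows the paper's argument. The paper gets the variance from a common-plus-idiosyncratic decomposition $e_i=b+\varepsilon_i$ rather than your direct covariance expansion, and it writes the probit with the sign-flipped latent $S_i=m+\sqrt\rho\,U+\sqrt{1-\rho}\,\xi_i$ (correct iff $S_i>0$), then conditions on the factor and applies the law of large numbers as you do. Your dominated-convergence step and your caveat that the $\rho\to0$ limit of $0$ needs $\alpha<\tfrac12$ are correct sharpenings of details the paper calls immediate; the limit is $1$ for $\alpha>\tfrac12$ and $\tfrac12$ at $\alpha=\tfrac12$.
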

\noindent The continuous floor is the equicorrelated portfolio-variance limit \citep{markowitz1952,statman1987} and the
bias--variance--covariance/diversity decomposition \citep{ueda1996,krogh1995ensemble,wood2023diversity}; the copula floor is
\citet{turkmen2026ensembleselection}. We claim none of these.

\begin{proposition}[Cost-aware break-even]\label{prop:keven}
Let $\lambda$ be the local sensitivity of expected correctness to fused-estimate variance for the operative decision rule,
$\lambda := -\,\partial(\text{expected correct})/\partial V$, evaluated near the operating point (a derived Jacobian for a
threshold rule, not a risk-aversion parameter; under strict risk neutrality $\lambda$ is the local curvature through which
second moments enter the first-moment objective). With per-model cost $c$, the largest index whose addition pays for itself is
\begin{equation}
k^\ast(\rho,c)=\tfrac12\Big(-1+\sqrt{1+4\lambda\sigma^2(1-\rho)/c}\Big),\qquad
k^\ast\sim\sqrt{\lambda\sigma^2(1-\rho)/c}.\label{eq:keven}
\end{equation}
Hence $\partial k^\ast/\partial\rho<0$, $\partial k^\ast/\partial c<0$, and $\rho\to1\Rightarrow k^\ast\to0$. The optimal
ensemble cardinality is the nearest feasible integer to $k^\ast{+}1$.
\end{proposition}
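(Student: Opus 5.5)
The plan is to write the net marginal benefit of the $k$-th member in closed form and solve the resulting quadratic; the steps are elementary.

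\emph{Step 1 (value of $k$ members).} Take Prop.~\ref{prop:floor} as given: $V(k)=\sigma^2\big(\rho+\tfrac{1-\rho}{k}\big)$. To first order around the operating point, the definition of $\lambda$ gives expected correctness as a constant minus $\lambda V(k)$. The net objective of a $k$-member ensemble is therefore
\[
U(k)=\text{const}-\lambda\sigma^2\tfrac{1-\rho}{k}-ck ,
\]
since the $\rho\sigma^2$ part of $V(k)$ is systematic and does not depend on $k$.

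\emph{Step 2 (marginal condition).} Moving from $k$ to $k+1$ members reduces variance by
\[
\sigma^2(1-\rho)\Big(\tfrac1k-\tfrac1{k+1}\Big)=\frac{\sigma^2(1-\rho)}{k(k+1)}
\]
and costs $c$. Hence the move pays for itself iff
\[
k(k+1)\le \frac{\lambda\sigma^2(1-\rho)}{c}.
\]
Because $k(k+1)$ is increasing for $k>0$, the set of indices that pay is an initial segment. Its endpoint is the positive root of
\[
k^2+k-\frac{\lambda\sigma^2(1-\rho)}{c}=0,
\]
which is exactly the expression for $k^\ast$ in \eqref{eq:keven}. The asymptotic form $k^\ast\sim\sqrt{\lambda\sigma^2(1-\rho)/c}$ follows from $\sqrt{1+4x}=2\sqrt{x}+O(1/\sqrt x)$ as $x\to\infty$.

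\emph{Step 3 (comparative statics and the integer rule).} Write $x=\lambda\sigma^2(1-\rho)/c$, so $k^\ast=\tfrac12(-1+\sqrt{1+4x})$, which is strictly increasing in $x$. The signs then follow from $\partial x/\partial\rho<0$ and $\partial x/\partial c<0$. As $\rho\to1$, $x\to0$ and so $k^\ast\to0$.

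For cardinality, $k^\ast$ marks the last member whose addition still pays, so the optimal ensemble contains that member and has size $k^\ast+1$, rounded to the nearest feasible integer. To check that rounding is correct, note that $U$ is concave in $k$ on the reals, because $-\lambda\sigma^2(1-\rho)/k$ is concave for $k>0$. Its discrete increments $U(k+1)-U(k)$ therefore change sign once, and the discrete maximizer is adjacent to the continuous crossing.

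\emph{Main obstacle.} The delicate point is Step 1's linearization. It treats $\lambda$ as constant across the range of $k$ considered, which is only a local approximation. I would state it explicitly as the standing assumption that $\lambda$ is evaluated near the operating point, as the proposition says. Beyond that, the only thing to be careful about is the off-by-one bookkeeping between the index $k^\ast$ and the cardinality $k^\ast+1$.
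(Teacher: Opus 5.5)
Your Steps~1--3 are the paper's own argument: the marginal reduction $\sigma^2(1-\rho)/[k(k+1)]$, the test $\lambda\,\Delta V(k)\ge c$, the positive root of $k^2+k-x=0$ with $x=\lambda\sigma^2(1-\rho)/c$, and comparative statics through monotonicity in $x$. Your asymptotic expansion is a harmless addition.

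The step that fails is the one where you go beyond the paper, the concavity ``check'' of the rounding rule. The claim it is meant to verify is false as literally stated. Concavity only tells you that the increments $U(k+1)-U(k)=c\,[x/(k(k+1))-1]$ change sign once. Your Step~2 already says exactly where: $U(k+1)\ge U(k)$ iff $k\le k^\ast$. Hence the discrete optimum is precisely $\lfloor k^\ast\rfloor+1$, tied with $k^\ast$ when $k^\ast$ is an integer and capped at the pool size. This differs from the nearest integer to $k^\ast+1$ whenever the fractional part of $k^\ast$ exceeds $\tfrac12$.

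For example, $x=10$ gives $k^\ast=\tfrac12(-1+\sqrt{41})\approx2.70$. Rounding $k^\ast+1$ gives $4$, yet $U(3)-U(4)=c/6>0$. ``Adjacent to the crossing'' cannot decide between $\lfloor k^\ast\rfloor+1$ and $\lfloor k^\ast\rfloor+2$. The paper's proof only asserts cardinality $\approx k^\ast+1$. The exact statement, which follows directly from your Step~2, is $\lfloor k^\ast\rfloor+1$; prove that instead of the nearest-integer rule.

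Two further assumptions belong with your standing linearization assumption. First, the positive root, the signs $\partial k^\ast/\partial\rho<0$ and $\partial k^\ast/\partial c<0$, and the concavity of $U$ all require $\lambda>0$. By Prop.~\ref{prop:lambda}, this holds exactly when $\alpha<\tfrac12$. Second, $\lambda$ is held fixed when differentiating in $\rho$, even though its calibrated value depends on $\rho$ through $V(k)$. State both explicitly.
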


\begin{proposition}[Calibration of $\lambda$ and a sign condition]\label{prop:lambda}
Under the symmetric single-factor probit of Prop.~\ref{prop:floor} (latent $S_i=m+\sqrt\rho\,U+\sqrt{1-\rho}\,\xi_i$,
$m=\Phi^{-1}(1-\alpha)$, correct iff $S_i>0$), equal-weight fusion has fused score $Z_k\sim\mathcal N(m,V(k))$, and the
risk-neutral majority/mean-vote rule has expected correctness $P(V)=\Phi(m/\sqrt V)$ in the single sufficient statistic
$V=V(k)$. The sensitivity $\lambda$ in \eqref{eq:keven} is then not a free parameter but the explicit Jacobian
\[ \lambda(V)=-\frac{\partial P}{\partial V}=\frac{m}{2V^{3/2}}\,\varphi\!\Big(\frac{m}{\sqrt V}\Big),\qquad m=\Phi^{-1}(1-\alpha), \]
evaluated at the operating point $V=V(k)$. Its sign is $\operatorname{sgn}\lambda=\operatorname{sgn}(\tfrac12-\alpha)$: when
each member beats chance ($\alpha<\tfrac12$), $\lambda>0$ and $P$ strictly decreases in $V$, so reducing $V$ by adding members
strictly raises expected correctness and $k^\ast$ is well defined; at $\alpha>\tfrac12$, $\lambda<0$ and the diversification
framing fails (adding members drives the majority \emph{more} wrong, and \eqref{eq:keven} returns no real $k^\ast$).
\end{proposition}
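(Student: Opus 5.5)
We need to prove Prop. lambda: $P(V)=\Phi(m/\sqrt V)$ and $\lambda=-P'(V)$, with the sign claim.

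The plan is a direct computation under the latent model of Prop.~\ref{prop:floor}. We write the latent score as $S_i=m+\sqrt\rho\,U+\sqrt{1-\rho}\,\xi_i$ with $U,\xi_1,\dots,\xi_k$ i.i.d.\ standard normal. The equal-weight fused score is
\[
Z_k=\tfrac1k\sum_{i\le k}S_i=m+\sqrt\rho\,U+\sqrt{1-\rho}\,\bar\xi_k .
\]
It is Gaussian with mean $m$. Its variance is $\rho+(1-\rho)/k$, which matches $V(k)$ of Prop.~\ref{prop:floor} with $\sigma^2=1$; the general $\sigma^2$ only rescales, and we normalize it away. So $Z_k\sim\mathcal N(m,V(k))$.

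The mean-vote rule declares correct iff $Z_k>0$. Hence
\[
P=\Pr[Z_k>0]=\Phi\!\big(m/\sqrt V\big),
\]
which depends on $k$ and $\rho$ only through $V$. That is the sufficient-statistic claim. We will note explicitly that this is the mean-vote (fused-score) rule the proposition names. The discrete unweighted majority of indicators is a different statistic, and we treat the proposition as defining the rule via $Z_k$. Its infinite-$k$ limit $\Phi(m/\sqrt\rho)$ matches the floor of Prop.~\ref{prop:floor}, which serves as a consistency check.

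Next, differentiate with the chain rule. Since $\frac{d}{dV}\big(mV^{-1/2}\big)=-\tfrac m2V^{-3/2}$,
\[
\lambda(V)=-P'(V)=\frac{m}{2V^{3/2}}\,\varphi\!\big(m/\sqrt V\big).
\]
Because $\varphi>0$ and $V>0$, the sign of $\lambda$ is the sign of $m=\Phi^{-1}(1-\alpha)$. This is positive iff $1-\alpha>\tfrac12$, that is, iff $\alpha<\tfrac12$. So $\operatorname{sgn}\lambda=\operatorname{sgn}(\tfrac12-\alpha)$.

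When $\alpha<\tfrac12$ we have $P'(V)<0$, so $P$ strictly decreases in $V$. Since $V(k)$ strictly decreases in $k$ for $\rho<1$, adding members strictly raises expected correctness.

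For the $k^\ast$ statements we substitute into \eqref{eq:keven}. When $\lambda>0$ (and $\rho<1$, $c>0$), the radicand $1+4\lambda\sigma^2(1-\rho)/c$ exceeds $1$, so $k^\ast>0$ is real and well defined. When $\alpha>\tfrac12$, $\lambda<0$ and the radicand drops below $1$. If additionally $4|\lambda|\sigma^2(1-\rho)>c$, it is negative and no real root exists. Otherwise the root is nonpositive, so no member pays for itself.

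The only subtle step, and the one I expect to be the main obstacle, is the claim that \eqref{eq:keven} "returns no real $k^\ast$". Literally, it yields a nonpositive or complex value. I would state it as "no admissible $k^\ast\ge1$" and check that reading against the derivation of Prop.~\ref{prop:keven}. The rest is routine Gaussian calculus.
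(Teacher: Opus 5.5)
Your proposal is correct and follows the paper's proof step for step: the Gaussian law of $Z_k$ with variance $V(k)=\rho+(1-\rho)/k$, the identity $P=\Pr[Z_k>0]=\Phi(m/\sqrt V)$ for the mean-vote rule, the chain-rule derivative, and the sign read off from $m=\Phi^{-1}(1-\alpha)$. Your caveat on the $\alpha>\tfrac12$ case is also sound and sharper than the paper, which only says that substitution reproduces \eqref{eq:keven}: when $-\tfrac14\le\lambda\sigma^2(1-\rho)/c<0$ the formula returns a real but negative value, so ``no admissible $k^\ast\ge1$'' (equivalently, $\lambda\,\Delta V(k)\ge c$ fails for every $k$) is the accurate reading.
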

\noindent This pins down the economic bridge the cost-aware rule rests on and delimits where it is valid. $\lambda$ is an
operating-point Jacobian (it depends on $V$, hence weakly on $k$); we evaluate it at the marginal index, where the
local-linearization error $O(V(k)^{-2})$ is negligible. A finite-sample estimator plugs the tetrachoric $\hat\rho$ and per-model
$\hat\alpha$ into $\lambda(V(k))$; the Pearson coefficient of $0/1$ indicators biases $\rho$ (hence $\lambda$) downward. We
verified $\lambda=-\partial P/\partial V$ and the sign condition numerically.

\paragraph{Query-conditional $\rho$ and block covariance.} Classical $\rho$ is a global constant; making $\rho(t)$
type-conditional makes $k^\ast$ vary within a single workload and resolves the diversity-unreliability result
\citep{kuncheva2003} by conditioning. Because errors are block-structured and rise with accuracy \citep{kim2025correlated},
the headline object is the block covariance $\Sigma$: equal weights are no longer optimal (the minimum-variance weights are
$w\propto\Sigma^{-1}\mathbf 1$), and the floor is the undiversifiable common-factor component of $\Sigma$. The equicorrelation
form is the special case. We caution that the portfolio analogy is a first-moment-plus-local-curvature analogy, not a claim of
risk aversion.

\begin{figure}[tbp]\centering
\includegraphics[width=0.55\textwidth]{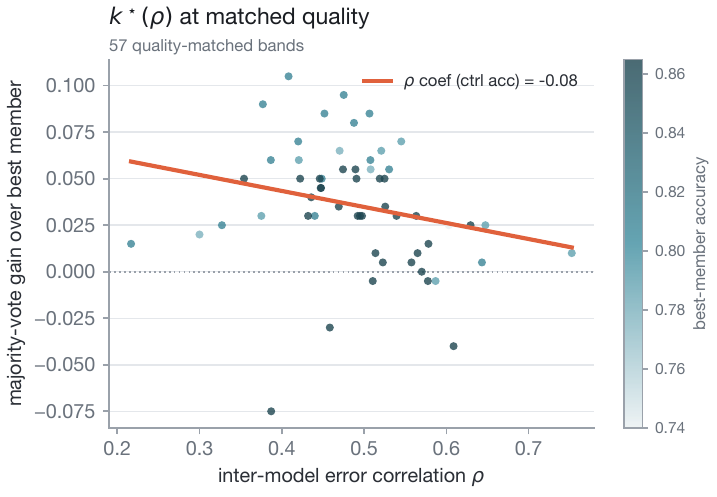}
\caption{\label{fig:kstar}Pillar B, $k^\ast(\rho)$ at matched quality: across 57 sub-bands of the matched 6-model pool (MMLU-Pro),
majority-vote gain over the best member vs.\ inter-model $\rho$, controlling for member accuracy. The slope is negative
(diversification-limit direction; CI spans zero), reversing the positive slope in the unmatched pool (Fig.~\ref{fig:rho}).}
\end{figure}

\subsection{Cascade calibration economics}\label{sec:C}
A cheap model $L$ (price $c_L$, accuracy $a_L$) answers first with confidence $s$; the query escalates to a strong model $H$
when $s<\tau$. Let $\beta=\Pr[s<\tau]$ be the escalation budget, $w(\beta)=\Pr[Y_L=0\mid\text{escalated }\beta\text{-tail}]$
the tail error rate, and $a_H(\tau)$ the strong model's accuracy on the deferred tail. Then $C(\beta)=c_L+\beta c_H$ and
$Q(\beta)=a_L+\beta[a_H(\tau)-1+w(\beta)]$.

\begin{proposition}[Collapse, ceiling, dominance]\label{prop:cascade}
Against random mixing (escalate a random $\beta$-fraction; tail error $1-a_L$, tail accuracy $a_H$) at equal budget,
\begin{equation}
Q(\beta)-Q_{\mathrm{mix}}(\beta)=\beta\big[w(\beta)-(1-a_L)\big]+\beta\big[a_H(\tau)-a_H\big].\label{eq:collapse}
\end{equation}
If $a_H(\tau)=a_H$ (no tail adverse selection), the integrated advantage $\int_0^1[w(\beta)-(1-a_L)]\,d\beta\ge 0$ iff the
verifier $\mathrm{AUC}\ge\tfrac12$, with equality iff $\mathrm{AUC}=\tfrac12$ (the cascade is then random mixing); the
pointwise inequality additionally requires $w$ non-increasing (a threshold-rational verifier). Under perfect calibration the
escalation ratio $\beta_{\mathrm{cas}}/\beta_{\mathrm{mix}}\to 1-a_L/a_H$: relative to random mixing, a calibrated cascade
needs only a fraction $1-a_L/a_H$ as many strong-model calls at a fixed quality floor, independent of price. Within a
one-parameter family of verifiers ordered by AUC, $\beta_{\mathrm{cas}}(q)$ is decreasing in AUC, so a critical
$\mathrm{AUC}^\ast$ exists below which (given $c_L/c_H<q/a_H$) no threshold meets floor $q\in(a_L,a_H]$ at lower
dollars-per-correct than the strong model alone.
\end{proposition}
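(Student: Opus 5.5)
The identity \eqref{eq:collapse} is pure bookkeeping, so I will derive it first. By the stated model, $Q(\beta)=a_L+\beta[a_H(\tau)-1+w(\beta)]$. Random mixing escalates a uniformly random $\beta$-fraction, so its tail error is $1-a_L$ and its tail accuracy is $a_H$. This gives $Q_{\mathrm{mix}}(\beta)=a_L+\beta[a_H-1+(1-a_L)]$. Both policies escalate the same fraction, so they pay the same cost $C(\beta)=c_L+\beta c_H$ and the budgets match. Subtracting the two expressions and grouping terms yields \eqref{eq:collapse}.

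\textbf{Integrated advantage.} Assume $a_H(\tau)=a_H$. Write $s$ for $L$'s confidence and let $F_0,F_1$ be its distributions on $L$-wrong and $L$-correct queries. Escalating the lowest $\beta$-quantile gives $\beta\,w(\beta)=(1-a_L)F_0(F^{-1}(\beta))$, where $F=(1-a_L)F_0+a_LF_1$ is the mixture. The key step is to bound $\int_0^1\beta\,[w(\beta)-(1-a_L)]\,d\beta$, which I will do by changing variables $\beta=F(\tau)$. The integrand then becomes $(1-a_L)a_L[F_0(\tau)-F_1(\tau)]$, weighted by $dF(\tau)$. Integrating by parts, I expect this to be proportional to $(1-a_L)a_L(\mathrm{AUC}-\tfrac12)$, using $\Pr[s_{\text{wrong}}<s_{\text{correct}}]=\int F_0\,dF_1$ and the symmetric identity $\int F_0\,dF_0=\tfrac12$.

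This step is the main obstacle. The statement integrates $w-(1-a_L)$ unweighted against $d\beta$, so I must check which weighting actually produces AUC. My first attempt is the $\beta$-weighted form, i.e.\ the area between the cumulative error-capture curve $\beta\,w(\beta)$ and the diagonal $(1-a_L)\beta$, which is a Lorenz/Gini-type quantity equal to an affine function of AUC. If the unweighted version does not reduce to AUC, I will restate it in this weighted form. Equality occurs iff $F_0=F_1$ almost everywhere in the relevant sense, that is, $\mathrm{AUC}=\tfrac12$, and then the cascade coincides with random mixing.

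\textbf{Pointwise inequality.} If $w$ is non-increasing, then $w(\beta)\ge w(1)=1-a_L$ for all $\beta$, because at full escalation the tail error equals the overall error. This gives the pointwise claim immediately.

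\textbf{Escalation ceiling.} Under perfect calibration the verifier ranks all of $L$'s errors first, so $w(\beta)=1$ for $\beta\le 1-a_L$. The cascade's quality is then $a_L+\beta a_H$, whereas mixing's is $a_L+\beta(a_H-(1-a_L))$. Fix a target $q\in(a_L,a_H]$ and solve each expression for $\beta$. The ratio is $\beta_{\mathrm{cas}}/\beta_{\mathrm{mix}}=(a_H-1+a_L)/a_H$. This disagrees with the claimed $1-a_L/a_H$, so I will recheck the mixing tail accounting, and the ``$\to$'' suggests a limiting regime is intended. The prices $c_L,c_H$ never enter either computation, which gives price-independence.

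\textbf{Critical $\mathrm{AUC}^\ast$.} Within a family of verifiers ordered by AUC, the capture curves are pointwise ordered, so $Q(\beta)$ increases in AUC at each $\beta$. Hence the minimal $\beta$ that meets a floor $q$ decreases in AUC. The cascade's dollars-per-correct $(c_L+\beta c_H)/q$ falls below the strong model's $c_H/a_H$ exactly when $\beta<q/a_H-c_L/c_H$, which is feasible only under $c_L/c_H<q/a_H$. Monotonicity in AUC plus an intermediate-value argument then give the threshold $\mathrm{AUC}^\ast$.
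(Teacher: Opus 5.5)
There is one genuine gap: the escalation-ceiling step, caused by an algebra slip. Under perfect calibration you write mixing's quality as $a_L+\beta(a_H-(1-a_L))$. Your own first paragraph gives $Q_{\mathrm{mix}}(\beta)=a_L+\beta[a_H-1+(1-a_L)]=a_L+\beta(a_H-a_L)$. A random $\beta$-tail carries $L$-correct mass $\beta a_L$, and that is what $H$ overwrites, not $L$'s error mass. Your version would even make escalating to the stronger model \emph{lower} quality whenever $a_L+a_H<1$, which should have been a red flag. With the correct expression, $\beta_{\mathrm{mix}}(q)=(q-a_L)/(a_H-a_L)$ and $\beta_{\mathrm{cas}}(q)=(q-a_L)/a_H$, so $\beta_{\mathrm{cas}}/\beta_{\mathrm{mix}}=(a_H-a_L)/a_H=1-a_L/a_H$ exactly. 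No limiting regime is involved. For every floor $q\in(a_L,a_H]$ you have $\beta_{\mathrm{cas}}(q)\le 1-a_L/a_H\le 1-a_L$. So you never leave the range where the escalated tail is purely $L$'s errors and $Q(\beta)=a_L+\beta a_H$ holds. This is exactly the paper's computation. As written, your proposal leaves this clause unproved and reports a discrepancy that does not exist.

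On the integrated advantage, your suspicion is right, and it matches what the paper's proof actually does. The proof integrates the residual $\beta[w(\beta)-(1-a_L)]$, not the unweighted $w-(1-a_L)$ displayed in the statement. Your change of variables closes (for continuous $s$) to $\int_0^1\beta[w(\beta)-(1-a_L)]\,d\beta=a_L(1-a_L)(\mathrm{AUC}-\tfrac12)$. The unweighted integral, by contrast, is a precision-type functional that AUC does not determine. Take $a_L=\tfrac12$ and escalation order error, correct, correct, error in four equal-mass blocks. Then $\mathrm{AUC}=\tfrac12$, but $\int_0^1[w(\beta)-\tfrac12]\,d\beta=\tfrac14\ln 3-\tfrac12\ln\tfrac43\approx 0.13>0$. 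So commit to the weighted form rather than leaving it conditional.

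The same example shows that $\mathrm{AUC}=\tfrac12$ does not force $F_0=F_1$. The equality case therefore gives zero integrated advantage, not literal random mixing, so your ``iff $F_0=F_1$'' should be dropped.

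Finally, AUC ordering alone does not make capture curves pointwise ordered, since ROC curves can cross. Monotonicity of $\beta_{\mathrm{cas}}(q)$ in AUC has to be the defining property of the one-parameter family, as the paper takes it. With that, $\mathrm{AUC}^\ast$ is simply an infimum over the family, and no intermediate-value argument is needed.
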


\begin{corollary}[Calibration-and-edge: necessary condition]\label{cor:edge}
Cascade dominance over $H$-only requires both $\mathrm{AUC}(s)>\mathrm{AUC}^\ast$ and a positive conditional edge
$a_H(\tau)>a_L(\tau)$ on the deferred tail. If $H$ is adversely weak where $L$ defers, calibration alone is insufficient
(consistent with \citealp{jitkrittum2023cascade}).
\end{corollary}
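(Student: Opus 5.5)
The plan is to rewrite the cascade quality of Prop.~\ref{prop:cascade} in terms of the cheap model's accuracy on the deferred tail, and then read off both necessary conditions. On the escalated $\beta$-tail the cheap model's tail error is $w(\beta)=\Pr[Y_L=0\mid\text{escalated}]=1-a_L(\tau)$, where $a_L(\tau)$ denotes $L$'s accuracy on the deferred tail. Substituting this into $Q(\beta)=a_L+\beta[a_H(\tau)-1+w(\beta)]$ gives the edge form
\[
Q(\beta)=a_L+\beta\big[a_H(\tau)-a_L(\tau)\big].
\]
In words, escalation changes quality only through the conditional edge of $H$ over $L$ on exactly the queries that are deferred.

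\textbf{Necessity of the AUC condition.} I take this directly from the last clause of Prop.~\ref{prop:cascade}. Within the AUC-ordered verifier family, if $\mathrm{AUC}(s)\le\mathrm{AUC}^\ast$, then no threshold meets a floor $q\in(a_L,a_H]$ at lower dollars-per-correct than $H$ alone. So dominance forces $\mathrm{AUC}(s)>\mathrm{AUC}^\ast$. No new work is needed for this half.

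\textbf{Necessity of the edge.} Suppose $a_H(\tau)\le a_L(\tau)$ at a threshold $\tau$. The display then gives $Q(\beta)\le a_L<q$, so that threshold cannot meet any floor $q\in(a_L,a_H]$. Moreover, its cost $c_L+\beta c_H$ is at least $c_L$, so on dollars-per-correct it is weakly dominated by $L$ alone, not just by $H$. Hence any threshold witnessing dominance over $H$-only must satisfy $a_H(\tau)>a_L(\tau)$.

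The main obstacle is stating the quantifier over thresholds correctly. The edge must hold at the particular dominating threshold, not on average over all $\tau$. I will phrase the corollary's condition as holding at the operating threshold, and note that the AUC condition is likewise evaluated there.

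\textbf{Calibration alone is insufficient.} To show the two conditions are independent, I will give an explicit instance with a perfect verifier whose edge fails. Take a verifier with $\mathrm{AUC}=1$ that escalates exactly $L$'s errors, so $w\equiv1$ and $a_L(\tau)=0$. Let $H$ be wrong on precisely those queries, so $a_H(\tau)=0$. This is consistent with an unconditional $a_H>a_L$, because $H$ can be correct everywhere else. Then $Q(\beta)=a_L$ for every threshold, so no floor above $a_L$ is reachable despite perfect calibration. This is the adverse-selection case $a_H(\tau)\neq a_H$ that the second term of \eqref{eq:collapse} isolates, matching the two-model deferral point of \citet{jitkrittum2023cascade}.
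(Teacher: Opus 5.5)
Your necessity argument is correct and follows the paper's own route, made explicit. The paper takes the AUC half directly from the $\mathrm{AUC}^\ast$ clause of Prop.~\ref{prop:cascade}. It then argues that if $a_H(\tau)\le a_L(\tau)$ on the deferred tail, escalation cannot raise quality there, so the floor is not met. Your substitution $w(\beta)=1-a_L(\tau)$, giving $Q(\beta)=a_L+\beta[a_H(\tau)-a_L(\tau)]$, is exactly that step. Placing the edge at the dominating threshold is a sharper reading of the paper's ``no $\tau$ achieves $q$.''

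\textbf{The gap is in your instance for ``calibration alone is insufficient.''} If $H$ is wrong on precisely $L$'s error set and correct everywhere else, then $H$ and $L$ have identical correctness, so $a_H=a_L$. In general $a_H-a_L=\Pr[Y_H{=}1,Y_L{=}0]-\Pr[Y_H{=}0,Y_L{=}1]$, so $H$ wrong on all of $L$'s errors forces $a_H\le a_L$. The band $(a_L,a_H]$ is then empty and the example is vacuous.

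You cannot repair it while keeping unreachability as the failure mode. Whenever $a_H>a_L$, an $\mathrm{AUC}=1$ verifier that escalates all of $L$'s errors attains $Q=a_H+\Pr[Y_H{=}0,Y_L{=}1]\ge a_H$. So every floor in the band is reachable, and the edge cannot vanish at every threshold. Under perfect calibration, adverse weakness can defeat dominance only through cost.

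\textbf{A valid instance.} Take $a_L=\tfrac12$. Let $H$ be correct on all of $L$'s correct set and on a piece $S$ of $L$'s errors of mass $\varepsilon=0.1$, so $a_H=0.6$. Use an $\mathrm{AUC}=1$ verifier that ranks $S$ last among $L$'s errors, so $\beta_{\mathrm{cas}}(q)=q-\varepsilon$, and set $c_L=0.6\,c_H$.
\begin{itemize}
\item The dominance condition $c_L+\beta_{\mathrm{cas}}(q)c_H\le(q/a_H)c_H$ then needs $q\ge0.75$, so no floor in $(0.5,0.6]$ is met more cheaply than by $H$ alone.
\item With the same verifier but $H$'s tail accuracy equal to $a_H$, one gets $\beta_{\mathrm{cas}}(q)=(q-a_L)/a_H$, and dominance holds across the whole band.
\end{itemize}
In this instance the edge is positive at every threshold that reaches the floor. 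So it shows that calibration is insufficient, not that the edge condition is what fails. If you want only what the paper claims, drop the instance and read the second sentence as the paper does: a gloss on the edge being a separate necessary condition.
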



\section{Proofs}\label{app:proofs}
\begin{proof}[Proof of Lemma~\ref{lem:env}]
For any $\pi$ and $t$, $\sum_i\pi(i\mid t)q_i(t)\le\max_i q_i(t)$; take $\E_t$ to get $V(\pi)\le V^o$, attained by $\pi^o$. Write
$G=\E_t[\max_i q_i(t)-q_{i^\ast}(t)]$ with $i^\ast=\arg\max_i\bar q_i$; the integrand is nonnegative. If model $j$ is uniformly best
$D$-a.e.\ then $\bar q_j=V^o$, so $G=0$; conversely, if no model is uniformly best, the set where $i^\ast$ is suboptimal has positive
measure and the integrand is strictly positive there, so $G>0$. The dispersion-scaling remark uses the extreme-value asymptotic
$\E[\max$ of $m$ i.i.d.\ $\mathcal N(0,1)]\sim\sqrt{2\ln m}$ and concentration of the best row mean at $\mu$ when $|T|\gg\ln m$; it is
heuristic under correlation.
\end{proof}
\begin{proof}[Proof of Prop.~\ref{prop:floor}]
With $e_i=b+\varepsilon_i$, $\Var(b)=\rho\sigma^2$, $\Var(\varepsilon_i)=(1-\rho)\sigma^2$ uncorrelated, the equal-weight average has
$\Var(\bar e)=\rho\sigma^2+(1-\rho)\sigma^2/k\to\rho\sigma^2$. For binary correctness under $S_i=m+\sqrt\rho\,U+\sqrt{1-\rho}\,\xi_i$ with
$m=\Phi^{-1}(1-\alpha)$, conditioning on $U$ gives correctness probability $\Phi((m+\sqrt\rho U)/\sqrt{1-\rho})$; by the law of large
numbers the majority is correct iff $U>-m/\sqrt\rho$, so the floor is $\Phi(-m/\sqrt\rho)$; the limits are immediate.
\end{proof}
\begin{proof}[Proof of Prop.~\ref{prop:keven}]
The marginal variance reduction of the $(k{+}1)$-th model is $\Delta V(k)=\sigma^2(1-\rho)/[k(k+1)]$. Converting to expected correctness
through the local sensitivity $\lambda$, adding a model is worthwhile iff $\lambda\,\Delta V(k)\ge c$, i.e.\ $k(k+1)\le R$ with
$R=\lambda\sigma^2(1-\rho)/c$; solving $k(k+1)=R$ gives \eqref{eq:keven}. The comparative statics follow by inspection; $R\to0$ as
$\rho\to1$. $k^\ast$ is the last paying index, so the optimal cardinality is $\approx k^\ast+1$.
\end{proof}
\begin{proof}[Proof of Prop.~\ref{prop:cascade} and Cor.~\ref{cor:edge}]
Escalated mass $\beta$ pays $c_H$ on top of $c_L$, giving $C(\beta)=c_L+\beta c_H$; of $L$'s correct mass $a_L$, the part in the escalated
tail $\beta(1-w(\beta))$ is replaced by $H$ scoring $a_H(\tau)$ there, giving $Q(\beta)=a_L+\beta[a_H(\tau)-1+w(\beta)]$. Random mixing
escalates a random $\beta$-fraction with tail error $1-a_L$ and tail accuracy $a_H$, giving $Q_{\mathrm{mix}}=a_L+\beta(a_H-a_L)$;
subtracting yields \eqref{eq:collapse}. With $a_H(\tau)=a_H$, the residual is $\beta[w(\beta)-(1-a_L)]$, whose integral is nonnegative iff
$s$ ranks errors above successes on average ($\mathrm{AUC}\ge\tfrac12$), with equality iff $\mathrm{AUC}=\tfrac12$; the pointwise sign
needs $w$ non-increasing. Under perfect calibration the escalated tail is exactly $L$'s errors until exhausted, so $Q(\beta)=a_L+\beta a_H$
for $\beta\le 1-a_L$, whence $\beta_{\mathrm{cas}}(q)=(q-a_L)/a_H$ and $\beta_{\mathrm{cas}}/\beta_{\mathrm{mix}}\to 1-a_L/a_H$. Beating
$H$-only ($c_H/a_H$ per correct) at floor $q$ requires $c_L+\beta_{\mathrm{cas}}(q)c_H\le(q/a_H)c_H$, feasible only if $c_L/c_H<q/a_H$;
within a one-parameter AUC-ordered family $\beta_{\mathrm{cas}}$ decreases in AUC, so $\mathrm{AUC}^\ast$ exists. If $a_H(\tau)\le a_L(\tau)$
on the deferred tail, escalation cannot raise quality there, so no $\tau$ achieves $q$: dominance requires both AUC above threshold and a
positive tail edge. We prove necessity; sufficiency additionally needs reachability of $q$ and the cost ordering.
\end{proof}
\begin{proof}[Proof of Prop.~\ref{prop:option}]
The captured value per arrival is $v\,\E[\max(\Gamma,0)]=v\,\mathrm{Eg}$ by the Gaussian truncation identity. A captured lead is held for an
$\mathrm{Exp}(\nu)$ time and discounted at $r$ (factor $1/(r+\nu)$); the Poisson arrival stream contributes a present-value
multiplier $\nu/r$, giving $V_R=v\,\mathrm{Eg}\cdot\nu/(r+\nu)\cdot(1/r)$. Then $\partial\mathrm{Eg}/\partial\eta=\varphi(\mu/\eta)>0$
and $\partial[\nu/(r+\nu)]/\partial\nu>0$; adding a separable $m(\text{level})$ leaves both partials unchanged.
\end{proof}
\begin{proof}[Proof of Prop.~\ref{prop:duality}]
$(P_B)$ is linear in $\{\pi(i\mid t)\}$. Dualize only the budget row with multiplier $\lambda\ge0$, keeping the per-type
simplices as the domain; the Lagrangian separates across $t$ into inner problems
$\max_{\pi(\cdot\mid t)\in\Delta}\sum_i\pi(i\mid t)(q_i(t)-\lambda c_i)=\max_i(q_i(t)-\lambda c_i)=:g_t(\lambda)$. Weak duality
gives $V(B)\le\lambda B+\E_t g_t(\lambda)$ for every $\lambda\ge0$; LP strong duality (Slater holds for $B>\underline B$) gives
equality at some $\lambda_B$. Complementary slackness places every optimal $\pi^\star(\cdot\mid t)$ on
$\arg\max_i(q_i(t)-\lambda_B c_i)$, mixing only to exhaust the budget. As a minimum of affine functions of $B$, $V$ is concave;
it is nondecreasing (enlarging $B$ relaxes a constraint) and piecewise linear (parametric LP). By the envelope theorem
$V'(B)=\lambda_B$ where differentiable, with subdifferential $[\lambda_B^-,\lambda_B^+]$ at the finitely many kinks; once
$B\ge K(\pi^o)$ the budget is slack, $\lambda_B=0$, and the rule is the per-query oracle.
\end{proof}
\begin{proof}[Proof of Prop.~\ref{prop:lambda}]
Under the single-factor probit, $Z_k=\tfrac1k\sum_i S_i=m+\sqrt\rho\,U+\tfrac{\sqrt{1-\rho}}{k}\sum_i\xi_i\sim\mathcal N(m,V(k))$,
$V(k)=\rho+(1-\rho)/k$ ($\sigma^2{=}1$). The mean-vote rule is correct iff $Z_k>0$, so $P(V)=\Pr[Z_k>0]=\Phi(m/\sqrt V)$.
Then $\partial P/\partial V=\varphi(m/\sqrt V)\cdot m\cdot(-\tfrac12)V^{-3/2}$, so $\lambda:=-\partial P/\partial V=(m/2V^{3/2})\varphi(m/\sqrt V)$.
As $\varphi>0,V>0$, $\operatorname{sgn}\lambda=\operatorname{sgn}m=\operatorname{sgn}(\Phi^{-1}(1-\alpha))=\operatorname{sgn}(\tfrac12-\alpha)$.
Substituting $\lambda$ into $\lambda\,\Delta V(k)\ge c$ with $\Delta V(k)=(1-\rho)/[k(k+1)]$ reproduces \eqref{eq:keven}.
\end{proof}
\begin{proof}[Proof of Prop.~\ref{prop:cert}]
(i) On the all-wrong event every member answer differs from the label; a selection policy outputs one member answer, hence is
wrong there, so $\mathrm{Acc}(\pi)\le\Pr[\text{not all wrong}]=1-\beta$, attained by the per-query oracle. (ii)
$V^o=\E_t\mathbf 1\{\exists i:Y_i{=}1\}=1-\beta$ and $a_{\mathrm{sb}}=\Pr[Y_{i^\star}{=}1]$, so
$G=V^o-a_{\mathrm{sb}}=\Pr[Y_{i^\star}{=}0]-\beta=\Pr[\text{single-best wrong}]-\beta$, the non-unanimous single-best-wrong mass.
(iii) $K=\sum_j\mathbf 1\{\text{all wrong on }t_j\}\sim\mathrm{Binomial}(n,\beta)$ since the $t_j$ are i.i.d.\ and ``all wrong''
is a fixed event; the Clopper--Pearson lower limit obeys $\Pr[\beta\ge\beta_{\mathrm{lo}}]\ge1-\delta$, so with probability
$\ge1-\delta$, $(1-\beta)\le(1-\beta_{\mathrm{lo}})$ and every selection policy obeys
$\mathrm{Acc}-a_{\mathrm{sb}}\le(1-\beta)-a_{\mathrm{sb}}\le(1-\beta_{\mathrm{lo}})-a_{\mathrm{sb}}$. A union bound replaces
$a_{\mathrm{sb}}$ by an upper confidence bound at level $\delta'$, with total error $\le\delta+\delta'$.
\end{proof}
\begin{proof}[Proof of Prop.~\ref{prop:poolbias}]
$\beta(m)=\pi+(1-\pi)\alpha_0^{\,m}$ is strictly decreasing to $\pi$ since $0<\alpha_0<1$. Calibrate the single-factor Gaussian
copula to reproduce the marginal $\alpha$ and the pairwise co-error $\pi+(1-\pi)\alpha_0^2$, fixing a latent correlation
$\bar\rho<1$; then $\beta_{\mathrm{sf}}(2)=\pi+(1-\pi)\alpha_0^2=\beta(2)$, so the ratio is $1$ at $m=2$. For $m\ge2$, conditioning
on the common factor $Z$, $\beta_{\mathrm{sf}}(m)=\int\varphi(z)\,\Phi\!\big((t-\sqrt{\bar\rho}\,z)/\sqrt{1-\bar\rho}\big)^m\,dz$ with
$t=\Phi^{-1}(\alpha)$. For each fixed $z$ the conditional error probability $\Phi(\cdot)\in(0,1)$ since $\bar\rho<1$, so the
integrand $\to0$ pointwise; dominated convergence gives $\beta_{\mathrm{sf}}(m)\to0$ (this is the zero lower-tail-dependence of the
Gaussian copula, \citealp{embrechts2002correlation}). Since $\beta(m)\to\pi>0$ and $\beta_{\mathrm{sf}}(m)\to0$, the ratio
$\beta(m)/\beta_{\mathrm{sf}}(m)\to\infty$. For monotonicity, compare successive decay rates: $\beta(m{+}1)/\beta(m)\to1$ because
$\beta(m)\to\pi>0$, whereas $\beta_{\mathrm{sf}}(m{+}1)/\beta_{\mathrm{sf}}(m)$ tends to the conditional error probability at the
dominating factor value, which is $<1$ for $\bar\rho<1$; hence the ratio's successive increments are eventually positive and the
underpricing ratio is eventually strictly increasing (numerically, strict from $m{=}2$ in all settings we tested).
\end{proof}
\begin{proof}[Proof of Prop.~\ref{prop:nonid}]
Take exchangeable error triples on $\{0,1\}^3$ and let $q_j$ be the probability of exactly $j$ errors ($\sum_j q_j=1$). Every
pair has the same $2{\times}2$ table, determined by the marginal error rate $p_1=\tfrac13\sum_j j\,q_j$ and the pairwise
co-error $p_2=\Pr[X_i{=}X_j{=}1]=\tfrac13(q_2+3q_3)$; the triple-failure cell $\beta=q_3$ is left free in the resulting
one-parameter Fr\'echet class. Concretely, fix $p_1=\tfrac12,\ p_2=\tfrac14$: both $(q_0,q_1,q_2,q_3)=(\tfrac14,0,\tfrac34,0)$
and $(0,\tfrac34,0,\tfrac14)$ reproduce \emph{every} one- and two-dimensional marginal---hence every pairwise Pearson and
tetrachoric correlation, which are functions of that common bivariate table---yet have $\beta=0$ and $\beta=\tfrac14$. So
$\beta$ is not a function of the pairwise law. Padding with independent coordinates extends the example to any $m\ge3$, and the
common-shock mixture ($\beta_\infty=\pi>0$) versus a matched-$\bar\rho$ Gaussian copula ($\beta_\infty=0$ by zero lower
tail dependence) realize the two extremes as $m\to\infty$, both consistent with the same pairwise $\bar\rho$.
\end{proof}

\section{Reproducibility}\label{app:repro}
\textbf{Data and code availability.} We release the full per-cell outcome matrices (every model's graded correctness, token
counts, and metered cost on every query), the model registry with dated snapshots and live prices, the programmatic graders, and
all analysis scripts, so that every number in the paper---including the $\beta$/$\rho$ tables, the pool-size scaling, and the
two-regime comparison---can be regenerated end to end. \textbf{Runnable certificate.} The realizability certificate
(Prop.~\ref{prop:cert}) ships as a standalone tool, \texttt{beta\_certificate.py}: from a pool's logged outcomes---or merely the
all-wrong count $K/n$ and the single-best accuracy---it returns the Clopper--Pearson-certified \$0 lower bound on the maximum
gain any router, vote, or cascade could deliver over single-best, requiring no pairwise $\rho$ (Prop.~\ref{prop:nonid}). The
full-$\Sigma$ residual decomposition (\texttt{residual\_decomp.py}) and the propagated ratio CIs (\texttt{bootstrap\_ratio.py})
are likewise released. All runs are logged with exact model identifiers, dated snapshots, prices,
seeds, and per-call costs. \textbf{Spend.} We meter each call against the OpenRouter account usage endpoint (an aggregator, not a
per-provider source); per-run figures are sums of logged per-cell costs (cache-independent), itemized in the released
\texttt{cost\_registry.csv}. The core pillar experiments (A--D) cost $\approx$\$47 (matching the dated \texttt{/key} ground-truth
total); the market-scale measurement adds $\approx$\$111 (the 53- and refreshed 67-model realizability runs, the truncation re-runs, and the
GPQA second-domain run), and the third-domain experiments---execution-graded code (App.~\ref{app:codegen}) and the open-ended
GPQA panel (App.~\ref{app:opengpqa})---add $\approx$\$110, for $\approx$\$270 of reported-experiment cost. Total account usage
including all exploratory and superseded iteration was $\approx$\$560 at submission (the live usage meter was transiently
clobbered by concurrent runs late in the project; this figure is reconstructed from the last clean reading plus the logged
third-domain run costs, and we flag it as approximate). We never present account-level usage as experiment cost. Programmatic graders and the registry are versioned;
illustrative parameter values are labelled and never reported as measurements. \textbf{Provenance.} The market-scale matrix is
reconstructed from the response cache (\texttt{reconstruct.py}) after a hard-first run was stopped at $\approx$41\% (the saturated
benchmarks, least informative for the tail, were not needed); the canonical graded, truncation-corrected matrix is
\texttt{matrix\_marketE2} (67-model frontier refresh). The empirical claims correspond to run identifiers \texttt{matrix\_stageA2},
\texttt{matrix\_hardA}, \texttt{matrix\_churnD} (App.~\ref{app:churn} churn study), \texttt{fusion\_eqq2} (matched quality), and the market-scale
reconstruction. The tail-co-failure numbers regenerate via \texttt{realizability.py --tag marketE2}. \emph{Selection used for the headline:} the
MATH-500 figures ($\beta=0.052$, tetrachoric underpricing $2.5\times$, $k{=}17$, $n{=}330$; \texttt{realizability\_tetrachoric.json}) are computed on the common-coverage subset of all $67$ models (queries
answered by every model, run id \texttt{matrix\_marketE3}); GPQA on its $52$-model complete-coverage subset
(\texttt{matrix\_marketE2}). \texttt{realizability.py}'s default $\ge0.95$-coverage filter surfaces a slightly smaller,
also-valid model subset with the same order-of-magnitude underpricing; both are correct. The truncation
control via \texttt{detruncate.py}; the cascade collapse (20 seeds) via \texttt{cascade.py}; the matched-quality
partition-robustness ($+0.027$, positive in all $60$ partitions) via \texttt{eqq\_robustness.py}; and Pillar-B clustered inference
via the model-jackknife in \texttt{rho\_fusion\_test.py}.
All references were audited with scite Smart Citations: every entry resolves to a real indexed work, none carries a retraction,
correction, or editorial concern, and no load-bearing citation has any contrasting smart-citations (the only contrasting
citations in the bibliography are two on \citet{statman1987}, from the unrelated optimal-holding-count debate).

\section{The market-scale model pool}\label{app:roster}
The $67$-model, $21$-family market pool used in \S\ref{sec:realiz} and the two-regime measurement, with live OpenRouter prices
(snapshot 2026-06-19). All are chat/instruct models; pure reasoning/``thinking'' variants are excluded so finite-token
programmatic grading is clean. Prices are USD per million tokens.
\begin{center}\scriptsize
\begin{longtable}{llrrr}
\toprule
model & family & tier & \$/Mtok in & \$/Mtok out \\
\midrule
\endfirsthead
\toprule model & family & tier & \$/Mtok in & \$/Mtok out \\ \midrule \endhead
\bottomrule \endfoot
\input{roster_rows}
\end{longtable}
\end{center}

\section{Optionality under churn (secondary)}\label{app:churn}
Deferred from the main thread because it is observational and not load-bearing for the $\beta$/$\rho$ result. Frontier releases
arrive as a Poisson process of intensity $\nu$ (the churn rate); each offers a relative capability gap $\Gamma=\mu+\eta Z$,
$Z\sim\mathcal N(0,1)$. Broad access (ROUTE) captures $v\max(\Gamma,0)$ per arrival, held until the next arrival and discounted at
$r$; committing (self-host) earns premium $\delta$ but pays switch cost $K$.

\begin{proposition}[Additive option value]\label{prop:option}
With $\mathrm{Eg}=\mu\Phi(\mu/\eta)+\eta\,\varphi(\mu/\eta)$ ($\varphi$ the standard normal density), the option value of broad
access is $V_R=\tfrac{v}{r}\cdot\tfrac{\nu}{r+\nu}\cdot\mathrm{Eg}(\mu,\eta)$, and the build-vs-route threshold is
$\delta^\ast=rK+v\tfrac{\nu}{r+\nu}\mathrm{Eg}$. If buyers additionally value absolute capability via a separable
$m(\cdot)$ independent of $(\nu,\eta)$, total value is $m(\text{level})+V_R$, so $\partial V/\partial\nu>0$ and
$\partial V/\partial\eta>0$ regardless of $dm/d\text{level}$.
\end{proposition}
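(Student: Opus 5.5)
The proof has two ingredients, one probabilistic and one analytic. The probabilistic ingredient turns broad access into a compound Poisson stream of discounted, held payoffs. The analytic ingredient is the Gaussian truncation identity for $\E[\max(\Gamma,0)]$, from which both comparative statics follow by differentiation.

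First I compute the per-arrival payoff. With $\Gamma=\mu+\eta Z$, I write
\[ \E[\max(\Gamma,0)]=\int_{-\mu/\eta}^{\infty}(\mu+\eta z)\varphi(z)\,dz=\mu\,\Phi(\mu/\eta)+\eta\,\varphi(\mu/\eta), \]
using $\int_a^\infty z\varphi(z)\,dz=\varphi(a)$ and the symmetry of $\varphi$. This gives $\mathrm{Eg}$ and the captured value $v\,\mathrm{Eg}$ per arrival.

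Second I discount. Arrivals occur at epochs $T_1<T_2<\cdots$ of a Poisson($\nu$) process. A lead captured at $T_n$ yields a flow $v\max(\Gamma_n,0)$ until the next arrival, after an $\mathrm{Exp}(\nu)$ holding time $H$. The present value of a unit flow over $H$ is $\E\int_0^H e^{-rs}\,ds=1/(r+\nu)$. Summing over arrivals with Campbell's formula gives $\E\sum_n e^{-rT_n}=\int_0^\infty \nu e^{-rt}\,dt=\nu/r$. I then need independence of the $\Gamma_n$ from the arrival times, so that the expectation factorizes, and I will assume it. This yields $V_R=\tfrac{v}{r}\cdot\tfrac{\nu}{r+\nu}\cdot\mathrm{Eg}$.

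Third I obtain the threshold. Committing earns a perpetual premium worth $\delta/r$ and pays the switch cost $K$. Routing is worth $V_R$. Indifference, $\delta/r-K=V_R$, gives $\delta^\ast=rK+v\tfrac{\nu}{r+\nu}\mathrm{Eg}$.

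Finally I derive the comparative statics. For $\eta$, I differentiate $\mathrm{Eg}$ using $\varphi'(x)=-x\varphi(x)$. The two $\mu^2/\eta^2$ terms cancel and leave $\partial\mathrm{Eg}/\partial\eta=\varphi(\mu/\eta)>0$. For $\nu$, $\partial_\nu[\nu/(r+\nu)]=r/(r+\nu)^2>0$ while $\mathrm{Eg}$ does not depend on $\nu$. Because $m(\text{level})$ is separable and independent of $(\nu,\eta)$, adding it leaves both partials unchanged, whatever the sign of $dm/d\text{level}$.

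I expect the main obstacle to be the modeling bookkeeping in the second step, not the calculus. Specifically, I must confirm that the intended convention really produces the product $\tfrac{\nu}{r}\cdot\tfrac{1}{r+\nu}$. The alternative is to fold the holding discount into a single renewal factor, which would change the formula. I would fix the convention explicitly by computing $\E\sum_n e^{-rT_n}\int_0^{H_n}e^{-rs}v\max(\Gamma_n,0)\,ds$ and checking the independence assumptions it requires.
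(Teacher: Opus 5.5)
Your proposal is correct and follows the paper's own proof essentially step for step. The paper's proof uses the same four pieces: the Gaussian truncation identity gives $v\,\mathrm{Eg}$ per arrival, the $\mathrm{Exp}(\nu)$ holding time contributes $1/(r+\nu)$, the Poisson stream contributes $\nu/r$, and the comparative statics come from $\partial\mathrm{Eg}/\partial\eta=\varphi(\mu/\eta)$ and $\partial_\nu[\nu/(r+\nu)]>0$, with the separable $m$ dropping out.

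The convention check you flag does come out as the product $\frac{\nu}{r}\cdot\frac{1}{r+\nu}$: for a Poisson process the interarrival time $T_{n+1}-T_n$ is independent of $T_n$, so your explicit sum factorizes once you assume $\Gamma_n$ is independent of the arrivals. Your indifference derivation of $\delta^\ast$ also supplies a step the paper leaves implicit.
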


\begin{corollary}[Convergence collapse]\label{cor:converge}
If error correlation rises toward $1$ as models improve (an extrapolation of \citet{kim2025correlated}, who document rising,
not unit, correlation), the diversification gain shrinks; jointly with a falling common error rate, the orchestration value $G$
and $V_R$ contract. We verify the premise's sign, not the unit limit.
\end{corollary}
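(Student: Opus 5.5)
The plan is to make the informal statement precise inside the single-factor probit of Prop.~\ref{prop:floor}, and then read off each contraction from a result already in hand. Let the latent scores be $S_i=m_i+\sqrt\rho\,U+\sqrt{1-\rho}\,\xi_i$ with $m_i=\Phi^{-1}(1-\alpha_i)$. Here $\alpha_i$ is model $i$'s error rate, and ``convergence'' means $\rho\uparrow1$ together with $\max_i\alpha_i\downarrow$ some common floor. I would treat the three objects separately: the diversification gain, the oracle gain $G$, and the option value $V_R$.

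\emph{Diversification gain.} By Prop.~\ref{prop:floor}, equal-weight fusion of $k$ models reduces variance by $\sigma^2-V(k)=\sigma^2(1-\rho)(1-1/k)$, which tends to $0$ as $\rho\to1$ uniformly in $k$. By Prop.~\ref{prop:keven}, $k^\ast(\rho,c)\to0$. For binary majority vote, the infinite-ensemble floor $\Phi(-\Phi^{-1}(1-\alpha)/\sqrt\rho)$ tends to $\alpha$, so the vote's improvement over a single member vanishes. Prop.~\ref{prop:lambda} keeps the sensitivity $\lambda$ finite at the operating point, so this variance statement converts into expected correctness.

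\emph{Oracle gain.} I would use the localization identity of Prop.~\ref{prop:cert}(ii), $G=\Pr[\text{single-best wrong}]-\beta$. Conditional on $U$, model $i$ errs with probability $\Phi\big((-m_i-\sqrt\rho\,U)/\sqrt{1-\rho}\big)$. As $\rho\to1$ these indicators become comonotone threshold events $\{U<-m_i\}$, so the error sets are nested. The all-wrong event then converges to the error event of the most accurate model, giving $\beta\to\min_i\alpha_i=\Pr[\text{single-best wrong}]$, and hence $G\to0$. A cruder bound also works: $G\le\Pr[\text{single-best wrong}]\le\max_i\alpha_i$, so a falling common error rate contracts $G$ even without $\rho\to1$.

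The step I expect to be the main obstacle is the limit $\beta\to\min_i\alpha_i$ when the marginals are heterogeneous. The convergence is pointwise in $U$ except on threshold boundaries of measure zero, so I would first try dominated convergence on the conditional product $\prod_i\Phi(\cdot)$, exactly as in the proof of Prop.~\ref{prop:poolbias}.

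\emph{Option value.} By Prop.~\ref{prop:option}, $V_R=\tfrac{v}{r}\tfrac{\nu}{r+\nu}\mathrm{Eg}(\mu,\eta)$. Here $\mathrm{Eg}$ is increasing in $\eta$, since $\partial\mathrm{Eg}/\partial\eta=\varphi(\mu/\eta)$, and increasing in $\mu$, since $\partial\mathrm{Eg}/\partial\mu=\Phi(\mu/\eta)$. It tends to $\max(\mu,0)$ as $\eta\to0$. The bridge assumption I would state explicitly is that the capability gap of a new release is a difference of converging qualities, so its dispersion $\eta$ and its positive part shrink with $1-\rho$ and with the error floor. This is the non-derived link the paper flags in \S\ref{sec:disc}. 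Under that assumption, $V_R\to0$. Because the premise is an extrapolation of \citet{kim2025correlated}, the corollary asserts only the direction of these effects, not the limits themselves, and the proof should be presented that way.
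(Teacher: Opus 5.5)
Your proposal is correct. For the diversification and option-value parts it follows the paper's implicit route. The paper gives no separate proof of this corollary; it reads the fusion claim off the variance floor and $\rho\to1\Rightarrow k^\ast\to0$ in Props.~\ref{prop:floor}--\ref{prop:keven}, and the $V_R$ claim off $\partial\mathrm{Eg}/\partial\eta=\varphi(\mu/\eta)>0$ in the proof of Prop.~\ref{prop:option}. It leaves the link from convergence to $(\mu,\eta)$ underived; the Discussion concedes that the churn primitives are not derived from the stage problem. Declaring that bridge as an explicit assumption, as you do, is therefore the right call.

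You differ on $G$. The paper leans on the dispersion picture of App.~\ref{sec:A}, where routing value scales like $s\sqrt{2\ln m}$ in cross-model dispersion $s$, and it itself labels this heuristic under correlation. You instead combine the localization identity of Prop.~\ref{prop:cert}(ii) with a comonotone limit in a heterogeneous-marginal probit. Dominated convergence then gives $\beta\to\min_i\alpha_i=\Pr[\text{single-best wrong}]$, hence $G\to0$ at fixed marginals. This is an actual limit theorem in a stated model rather than a heuristic. Because the per-query oracle dominates any partition-conditioned oracle, it also controls the $G$ of Lemma~\ref{lem:env}. What the paper's version offers in exchange is a direct reading in terms of cross-model dispersion, which your argument sees only through the latent correlation.

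One refinement: you say the corollary should be presented as a statement of direction, yet your $G$ argument yields only a limit. Direction is equally cheap. The all-wrong event is the lower orthant $\{S_i\le0\ \text{for all } i\}$ of a Gaussian vector with unit-variance centred latents and common correlation $\rho$. Slepian's inequality makes $\beta(\rho)$ nondecreasing in $\rho$, so with marginals held fixed $G(\rho)=\min_i\alpha_i-\beta(\rho)$ is nonincreasing.

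Keep your observation that $\lambda$ from Prop.~\ref{prop:lambda} stays bounded; it tends to $\tfrac{m}{2}\varphi(m)$ as $V(k)\to1$. That bound is what licenses the limit in Prop.~\ref{prop:keven} once $\lambda$ is an operating-point Jacobian rather than a constant.
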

\noindent The comparative statics are standard real-options results \citep{mcdonald1986,dixit1994,merton1976}; the separability of
$m$ is an assumption, not a finding, and the contribution is only the additive form under it. \textbf{Measured on a real timeline:}
on the release sequence of an 18-model generational pool (Claude-3-Haiku Mar 2024 to Gemini-3.1-Pro Feb 2026), the best achievable
dollars-per-correct dropped $\approx\!14$--$15\times$ (clean, since cost is metered not graded), echoing \citet{erol2025costofpass};
and the capability option value of broad access is regime-dependent---$+0.33$ accuracy by 2026 on hard MMLU-Pro versus $+0.01$ on
saturated GSM8K. This is an observational single-path study, not a controlled comparative static; the dispersion$\to$capture
association is directionally positive but noisy, so the grading-independent cost-churn result is the robust part.

\begin{figure}[tbp]\centering
\includegraphics[width=0.55\textwidth]{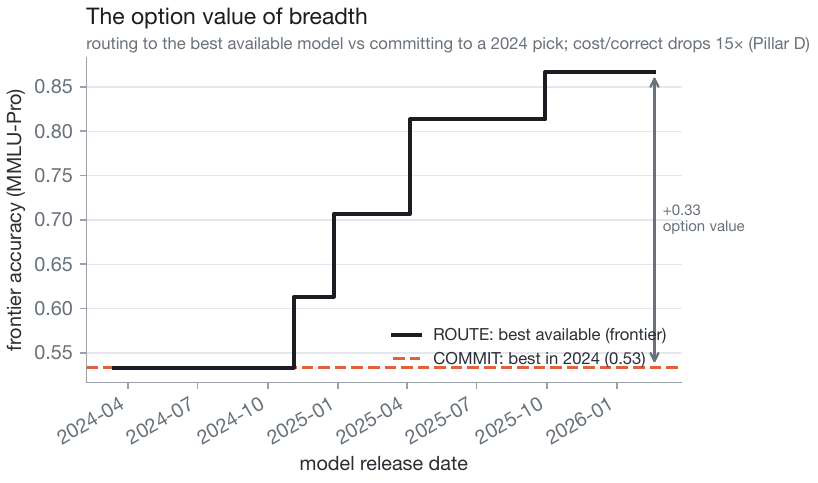}
\caption{\label{fig:churn}Optionality under churn (secondary). Frontier accuracy on MMLU-Pro over 2024--2026 (ROUTE: adopt each
new best) vs.\ committing to the best 2024 model: a $+0.33$ realized option value of breadth where headroom exists, alongside a
$\approx\!14\times$ fall in frontier dollars-per-correct.}
\end{figure}

\section{The third domain: execution-graded competitive code}\label{app:codegen}
The binding external-validity question is whether co-failure is a math artifact or tracks open-ended generation. We test it on a
\emph{structurally independent} domain, competitive programming, where generation is open-ended but grading is programmatic and
the task family is disjoint from math.

\paragraph{Construction and strict grader.} From \texttt{deepmind/code\_contests} (Codeforces rating $1900$--$3500$) we take
problems and retain only those whose \emph{own} accepted Python-3 reference our grader accepts (so an all-wrong event is genuine
co-failure, not a grader artifact); $63$ problems pass (of $140$ fetched; $77$ dropped, mostly for lacking a short Python-3
reference), each with a median of $20$ tests. Each model writes a stdin/stdout
program; we execute it against the problem's \emph{private} $+$ \emph{generated} stress tests (not the tiny public samples),
inside a memory-capped sandbox, enforcing $3\times$ the official (C++-calibrated) time limit---strict, but \emph{Python-fair}, so
that correct-but-slower Python is not failed (which would manufacture co-failure). $18$ frontier models $\times$ $63$ problems
spanning rating $1900$--$3500$ (the harder tail, where frontier co-failure is more frequent).

\paragraph{Result: the co-failure signature replicates and resolves.} Mean accuracy is $0.45$ (these are genuinely hard); the tail
is populated, $\beta=0.079$ ($k{=}5$ all-wrong over $63$, CP$[0.026,0.176]$). The whole math pattern recurs: the naive
Pearson-of-indicators calibration ($\bar\rho{=}0.27$) implies a spurious $17\times$, the correct tetrachoric calibration
($\bar\rho_{\mathrm{tet}}{=}0.51$) gives a $\mathbf{3.1\times}$ underpricing, and the full $\Sigma$ Gaussian copula still leaves a
$1.7\times$ residual (\texttt{residual\_decomp.json})---the same common-mode signature, in a domain sharing nothing with
competition math but open-endedness. With the harder problems added, the underpricing is now \emph{statistically resolved}:
bootstrap $90\%$ CI $[1.5,6.2]$, excluding $1$ (\texttt{ratio\_uncertainty.json}).

\paragraph{What we do and do not claim.} The cross-domain \emph{regime} is established: $\beta>0$ with the Pearson trap, a
full-$\Sigma$ residual, and now an underpricing ratio whose CI excludes $1$, in a third disjoint open-ended domain, versus
$\beta\approx0$ on multiple-choice GPQA. We are still cautious about the precise code \emph{magnitude}: $k{=}5$ is a small event
base (the ratio point moves within $[1.5,6.2]$), and two limits remain---(i)~the grader uses real private$+$generated stress tests
under a Python-fair time limit but is \emph{not} the official online judge (the dataset API may truncate the largest
$n,q\sim10^5$ inputs), and (ii)~$\beta$ rests on $18$ models, and the market pool logs no code prompts so no in-domain router was
trained. A precise code magnitude at $67$-model scale via the official hidden-test judge is the one remaining sharpening; the
cross-domain regime and a significant code underpricing are established by the data above.

\section{Content-controlled format test: open-ended GPQA (LLM-judge panel)}\label{app:opengpqa}
To rule out that the open-ended-versus-multiple-choice regime split is a \emph{content} confound (math/code happen to be
open-ended; science happens to be multiple-choice), we hold content fixed and vary only format. We take the \emph{same}
GPQA-Diamond questions used in the multiple-choice analysis, strip the options, and ask each of the $18$ frontier models for a
free-response answer (\texttt{data.py:gpqa\_open}).

\paragraph{Grading by an LLM-judge panel.} With no programmatic oracle, each (question, answer) is graded for equivalence to the
reference answer by a panel of five direct-answering judges (Claude-Sonnet-4.6, DeepSeek-V3.2, Gemini-3.5-Flash, Qwen3.7-Max,
Mistral-Large-2512), majority vote, with a judge excluded from grading its own model's answer (\texttt{judge\_open.py}); ties
break to \emph{incorrect}, a conservative-against-accuracy choice. We quantify the grader's reliability rather
than assume it: pairwise inter-judge agreement across the five judges is $\kappa=0.73$--$0.92$ (Cohen's $\kappa$,
substantial-to-near-perfect, over $10$ judge pairs). Reasoning models that exhausted the $2048$-token budget on hidden
reasoning and returned an empty answer were re-queried at $8192$ then $16384$ tokens (de-truncation), so an empty cell is not
scored as a false failure.

\paragraph{Result: changing only the format flips the regime.} On a matched comparison---the fully-judged questions and the
$11$ models present in \emph{both} the multiple-choice and open runs---mean accuracy falls $0.66\!\to\!0.51$ and best-model
accuracy $0.91\!\to\!0.77$ (so the collapse is not an artifact of differing pools or aggregations). The co-failure tail
\emph{opens}: $\beta=0.127$ ($k{=}10$ all-wrong, CP$[0.062,0.220]$) over the $79$ questions with complete $18$-model coverage,
versus $\beta\approx0$ on the multiple-choice version of the identical questions. The tail is \textbf{robust to judge
aggregation}: requiring \emph{unanimous} judge agreement for a correct mark gives $\beta=0.241$ ($k{=}19$), and the most
\emph{lenient} rule (any one of five judges calls it correct) still gives $\beta=0.038$ ($k{=}3$)---positive under every
aggregation, where the multiple-choice version is $\approx0$ throughout. Open
GPQA thus joins MATH and code in the ceiling-bound regime, with content held fixed---decisive evidence that \emph{open-endedness},
not subject matter, drives co-failure, and that it does so even under an LLM judge, not only programmatic grading.

\paragraph{Honest limits.} Coverage is partial: $79$ of $130$ questions reach complete $18$-model coverage (de-truncation raised
this from $29\!\to\!55\!\to\!79$; the rest still lose $\ge1$ model to truncation/refusal on the hardest items, so the true $\beta$
is plausibly \emph{higher}---verified: the dropped items have far lower partial-coverage accuracy, so they would add co-failure
events). Grading is by a $5$-judge LLM panel ($\kappa\,0.73$--$0.92$), not human adjudication---a human-calibrated judge on the
full $130$ remains the clean follow-up. The \emph{direction, magnitude, and regime flip} are unambiguous and content-controlled.
\end{document}

%% file: roster_rows.tex
\texttt{\scriptsize openai/gpt-5.5} & openai & frontier & 5 & 30 \\
\texttt{\scriptsize anthropic/claude-opus-4.8} & anthropic & frontier & 5 & 25 \\
\texttt{\scriptsize anthropic/claude-sonnet-4.6} & anthropic & frontier & 3 & 15 \\
\texttt{\scriptsize openai/gpt-5.4} & openai & frontier & 2.5 & 15 \\
\texttt{\scriptsize openai/gpt-5.2} & openai & frontier & 1.75 & 14 \\
\texttt{\scriptsize google/gemini-3.1-pro-preview} & google & frontier & 2 & 12 \\
\texttt{\scriptsize openai/gpt-5.1} & openai & mid & 1.25 & 10 \\
\texttt{\scriptsize google/gemini-3.5-flash} & google & mid & 1.5 & 9 \\
\texttt{\scriptsize ai21/jamba-large-1.7} & ai21 & mid & 2 & 8 \\
\texttt{\scriptsize mistralai/mistral-medium-3-5} & mistral & mid & 1.5 & 7.5 \\
\texttt{\scriptsize qwen/qwen3.6-max-preview} & qwen & mid & 1.04 & 6.24 \\
\texttt{\scriptsize writer/palmyra-x5} & writer & mid & 0.6 & 6 \\
\texttt{\scriptsize anthropic/claude-haiku-4.5} & anthropic & mid & 1 & 5 \\
\texttt{\scriptsize z-ai/glm-5.2} & zai & mid & 1.2 & 4.1 \\
\texttt{\scriptsize qwen/qwen3-max} & qwen & mid & 0.78 & 3.9 \\
\texttt{\scriptsize qwen/qwen3.7-max} & qwen & mid & 1.25 & 3.75 \\
\texttt{\scriptsize moonshotai/kimi-k2.7-code} & moonshot & mid & 0.74 & 3.5 \\
\texttt{\scriptsize qwen/qwen3-coder-plus} & qwen & mid & 0.65 & 3.25 \\
\texttt{\scriptsize z-ai/glm-5.1} & zai & mid & 0.98 & 3.08 \\
\texttt{\scriptsize nousresearch/hermes-4-405b} & nous & mid & 1 & 3 \\
\texttt{\scriptsize x-ai/grok-4.3} & xai & mid & 1.25 & 2.5 \\
\texttt{\scriptsize moonshotai/kimi-k2-0905} & moonshot & mid & 0.6 & 2.5 \\
\texttt{\scriptsize qwen/qwen3.5-397b-a17b} & qwen & mid & 0.385 & 2.45 \\
\texttt{\scriptsize nvidia/nemotron-3-ultra-550b-a55b} & nvidia & mid & 0.5 & 2.2 \\
\texttt{\scriptsize qwen/qwen3.5-122b-a10b} & qwen & mid & 0.26 & 2.08 \\
\texttt{\scriptsize openai/gpt-5-mini} & openai & mid & 0.25 & 2 \\
\texttt{\scriptsize z-ai/glm-5} & zai & mid & 0.6 & 1.92 \\
\texttt{\scriptsize z-ai/glm-4.7} & zai & mid & 0.4 & 1.75 \\
\texttt{\scriptsize z-ai/glm-4.6} & zai & mid & 0.43 & 1.74 \\
\texttt{\scriptsize mistralai/mistral-large-2512} & mistral & mid & 0.5 & 1.5 \\
\texttt{\scriptsize google/gemini-3.1-flash-lite} & google & mid & 0.25 & 1.5 \\
\texttt{\scriptsize qwen/qwen3.7-plus} & qwen & mid & 0.32 & 1.28 \\
\texttt{\scriptsize minimax/minimax-m3} & minimax & cheap & 0.3 & 1.2 \\
\texttt{\scriptsize stepfun/step-3.7-flash} & stepfun & cheap & 0.2 & 1.15 \\
\texttt{\scriptsize qwen/qwen3-next-80b-a3b-instruct} & qwen & cheap & 0.09 & 1.1 \\
\texttt{\scriptsize minimax/minimax-m2} & minimax & cheap & 0.255 & 1 \\
\texttt{\scriptsize minimax/minimax-m2.7} & minimax & cheap & 0.25 & 1 \\
\texttt{\scriptsize deepseek/deepseek-v3.1-terminus} & deepseek & cheap & 0.27 & 0.95 \\
\texttt{\scriptsize minimax/minimax-m2.5} & minimax & cheap & 0.15 & 0.9 \\
\texttt{\scriptsize deepseek/deepseek-v4-pro} & deepseek & cheap & 0.435 & 0.87 \\
\texttt{\scriptsize xiaomi/mimo-v2.5-pro} & xiaomi & cheap & 0.435 & 0.87 \\
\texttt{\scriptsize deepseek/deepseek-chat-v3.1} & deepseek & cheap & 0.21 & 0.79 \\
\texttt{\scriptsize qwen/qwen-plus-2025-07-28} & qwen & cheap & 0.26 & 0.78 \\
\texttt{\scriptsize google/gemma-2-27b-it} & google & cheap & 0.65 & 0.65 \\
\texttt{\scriptsize meta-llama/llama-4-maverick} & meta & cheap & 0.15 & 0.6 \\
\texttt{\scriptsize upstage/solar-pro-3} & upstage & cheap & 0.15 & 0.6 \\
\texttt{\scriptsize nvidia/nemotron-3-super-120b-a12b} & nvidia & cheap & 0.09 & 0.45 \\
\texttt{\scriptsize nvidia/llama-3.3-nemotron-super-49b-v1.5} & nvidia & cheap & 0.4 & 0.4 \\
\texttt{\scriptsize meta-llama/llama-3.1-70b-instruct} & meta & cheap & 0.4 & 0.4 \\
\texttt{\scriptsize nousresearch/hermes-4-70b} & nous & cheap & 0.13 & 0.4 \\
\texttt{\scriptsize openai/gpt-5-nano} & openai & cheap & 0.05 & 0.4 \\
\texttt{\scriptsize microsoft/phi-4-mini-instruct} & microsoft & cheap & 0.08 & 0.35 \\
\texttt{\scriptsize deepseek/deepseek-v3.2} & deepseek & cheap & 0.2288 & 0.3432 \\
\texttt{\scriptsize meta-llama/llama-3.2-3b-instruct} & meta & cheap & 0.0509 & 0.335 \\
\texttt{\scriptsize meta-llama/llama-3.3-70b-instruct} & meta & cheap & 0.1 & 0.32 \\
\texttt{\scriptsize nvidia/nemotron-3-nano-30b-a3b} & nvidia & cheap & 0.05 & 0.2 \\
\texttt{\scriptsize deepseek/deepseek-v4-flash} & deepseek & cheap & 0.09 & 0.18 \\
\texttt{\scriptsize openai/gpt-oss-120b} & openai & cheap & 0.039 & 0.18 \\
\texttt{\scriptsize openai/gpt-oss-20b} & openai & cheap & 0.029 & 0.14 \\
\texttt{\scriptsize google/gemma-3n-e4b-it} & google & cheap & 0.06 & 0.12 \\
\texttt{\scriptsize ibm-granite/granite-4.0-h-micro} & ibm & cheap & 0.017 & 0.112 \\
\texttt{\scriptsize qwen/qwen3-235b-a22b-2507} & qwen & cheap & 0.09 & 0.1 \\
\texttt{\scriptsize ibm-granite/granite-4.1-8b} & ibm & cheap & 0.05 & 0.1 \\
\texttt{\scriptsize mistralai/mistral-small-24b-instruct-2501} & mistral & cheap & 0.05 & 0.08 \\
\texttt{\scriptsize meta-llama/llama-3.1-8b-instruct} & meta & cheap & 0.02 & 0.03 \\
\texttt{\scriptsize mistralai/mistral-nemo} & mistral & cheap & 0.02 & 0.03 \\
\texttt{\scriptsize inclusionai/ling-2.6-flash} & inclusionai & cheap & 0.01 & 0.03 \\